\documentclass{article}

\usepackage{microtype}
\usepackage{graphicx}
\usepackage{subcaption}
\usepackage{booktabs} 

\usepackage{hyperref}

\usepackage[preprint]{icml2026}

\usepackage{amsmath}
\usepackage{amssymb}
\usepackage{mathtools}
\usepackage{amsthm}

\usepackage[utf8]{inputenc} 
\usepackage[T1]{fontenc}    
\usepackage{url}            
\usepackage{amsfonts}       
\usepackage{bbm}
\usepackage{nicefrac}       
\usepackage[dvipsnames]{xcolor}
\usepackage{comment}
\usepackage{adjustbox}

\usepackage[capitalize,noabbrev]{cleveref}

\theoremstyle{plain}
\newtheorem{theorem}{Theorem}[section]
\newtheorem{proposition}[theorem]{Proposition}
\newtheorem{lemma}[theorem]{Lemma}

\theoremstyle{definition}

\theoremstyle{remark}
\newtheorem{remark}[theorem]{Remark}

\usepackage[textsize=tiny]{todonotes}

\icmltitlerunning{Conformal Prediction for Generative Models via Adaptive Cluster-Based Density Estimation}

\begin{document}

\twocolumn[
  \icmltitle{Conformal Prediction for Generative Models via \\ Adaptive Cluster-Based Density Estimation}



  \icmlsetsymbol{equal}{*}

  \begin{icmlauthorlist}
    \icmlauthor{Qidong Yang}{1}
    \icmlauthor{Julie Zhu}{1}
    \icmlauthor{Jonathan Giezendanner}{1}
    \icmlauthor{Youssef Marzouk}{1}
    \icmlauthor{Stephen Bates}{1}
    \icmlauthor{Sherrie Wang}{1}
  \end{icmlauthorlist}

  \icmlaffiliation{1}{MIT}

  \icmlcorrespondingauthor{Qidong Yang, Sherrie Wang}{qidong, sherwang@mit.edu}

  \icmlkeywords{Machine Learning, ICML}

  \vskip 0.3in
]



\printAffiliationsAndNotice{}  

\begin{abstract}
Conditional generative models map input variables to complex, high-dimensional distributions, 
enabling realistic sample generation in a diverse set of domains.
A critical challenge with these models is the absence of calibrated uncertainty, 
which undermines trust in individual outputs for high-stakes applications. 
To address this issue, we propose a systematic conformal prediction approach tailored to conditional generative models, 
leveraging density estimation on model-generated samples.
We introduce a novel method called CP4Gen, 
which utilizes cluster-based density estimation to construct prediction sets that are less sensitive to outliers, 
more interpretable, and of lower structural complexity than existing methods. 
Extensive experiments on synthetic datasets and real-world applications, including climate emulation tasks, demonstrate that 
CP4Gen consistently achieves superior performance in terms of prediction set volume and structural simplicity.
Our approach offers practitioners a powerful tool for uncertainty estimation associated with conditional generative models, 
particularly in scenarios demanding rigorous and interpretable prediction sets.
\end{abstract}

\section{Introduction}
Modern machine learning applications increasingly rely on generative models to produce complex, high-dimensional outputs 
such as images, text, and molecular structures~\citep{karras2019, brown2020, gomez-bombarelli2018}. 
\begin{figure}[!hb]
    \centering
    (a) Calibration of CP4Gen \\
    \vspace{2mm}
    \includegraphics[width=1\columnwidth]{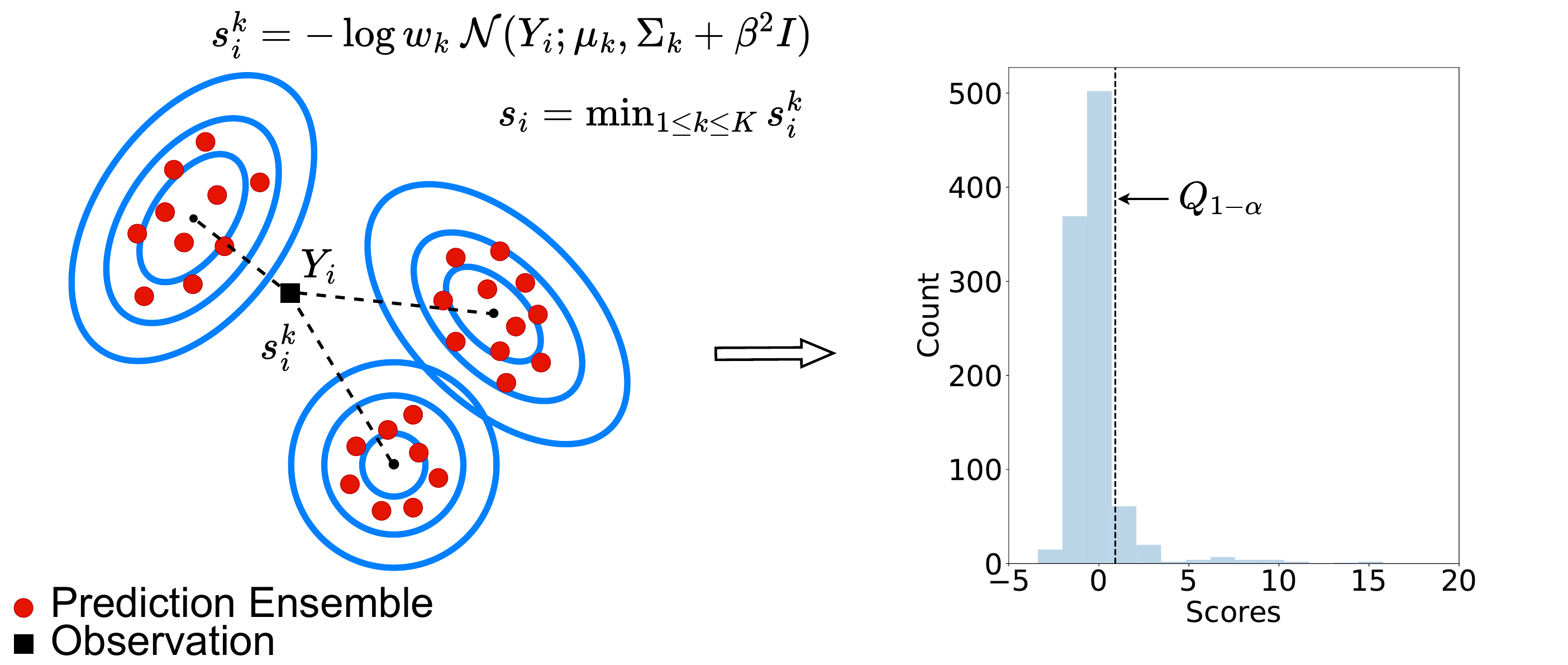} \\
    \vspace{1mm}
    (b) Inference of CP4Gen \\
    \hspace{-11mm}
    \includegraphics[width=1\columnwidth]{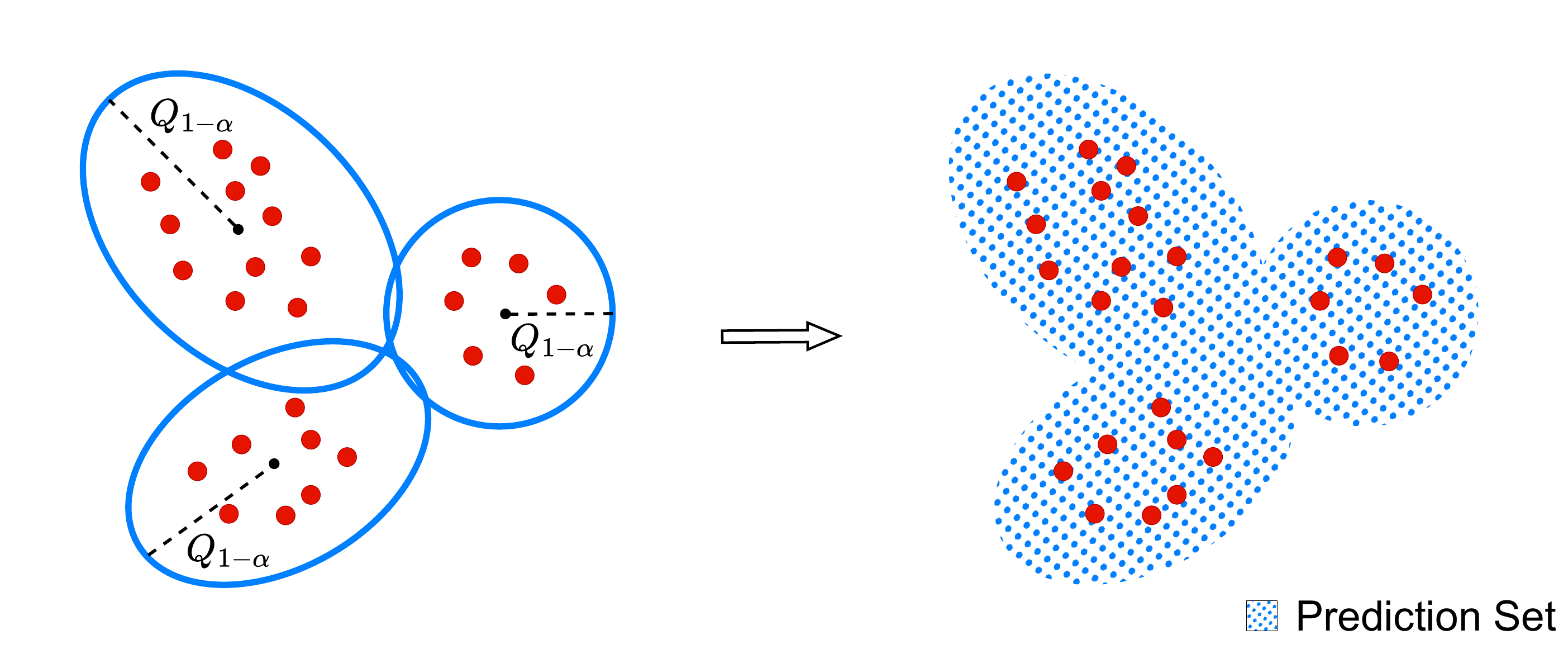} \\

    \caption{\textbf{The two phases of CP4Gen: calibration and inference.}
    (a) $K$-means is applied to prediction ensemble members. 
    Each identified cluster is treated as one mode of a Gaussian Mixture Model, 
    whose sample mean $\boldsymbol{\mu}_k$ and covariance matrix $\boldsymbol{\Sigma}_k$ are calculated.
    Nonconformity score $s_{i}$ is defined as the minimal $s_{i}^{k}$ between observation $Y_{i}$ and each cluster.
    Nonconformity score quantile $Q_{1-\alpha}$ is then computed on the calibration set.
    (b) Given a new set of covariates, ensemble members are sampled and $K$-means is applied on top.
    Prediction set is obtained by inverting nonconformity score function with $Q_{1-\alpha}$ 
    resulting as a union of $K$ ellipsoids.}
    \label{fig:overview}
\end{figure}
These models, such as variational autoencoders~\citep{kingma2022}, generative adversarial networks~\citep{goodfellow2014}, and diffusion models~\citep{song2021}, 
have shown impressive performance across domains ranging from natural language processing to scientific discovery. 
However, a critical limitation of these generative models is their lack of calibrated uncertainty estimates~\citep{gal2016}.
Although they can generate realistic samples, it remains unclear how much we should trust a single output, 
particularly in high-stakes domains like medical diagnosis or autonomous systems. 

Conformal prediction (CP) provides a model-agnostic, post-hoc framework that yields prediction sets with finite-sample coverage guarantees while maintaining sharpness~\citep{vovk2005, angelopoulos2025a}. 
However, classical conformal prediction methods are primarily designed for regression and classification tasks. 
Given the need of uncertainty quantification for generative models, 
a conformal prediction extension is desired.
There have recently been some initial attempts to extend conformal prediction to the generative setting.
For example,~\citet{wang2022_a} proposed probabilistic conformal prediction (PCP),
which constructs prediction sets as unions of balls centered at ensemble members from conditional generative models. 
While PCP is broadly applicable in the sample-only setting, its isotropic, structure-free construction tends to over-cover and produce disconnected components sensitive to outliers.

This work proposes a new CP method, called CP4Gen, which successfully addresses these limitations.
It introduces a lightweight structural assumption on ensemble members (i.e., Gaussian mixture) through clustering and local covariance estimation, 
enabling prediction sets that are sharper, more robust, and more interpretable.
This approach adopts the split conformal prediction framework~\citep{lei2017}, 
which avoids the high computational cost of leave-one-out CP by splitting the dataset into training and calibration sets. 
In this paper, we also provide a density estimation view of CP for generative models.
It unifies CP4Gen and PCP under the same framework and outlines design space for more generative CP methods.

Our main contributions are summarized as follows:

\begin{itemize}
    \item We formulate an extension of the conformal prediction framework to the generative setting.
    This new approach works with any existing conditional generative models 
    and only requires the capability to draw samples from the models. 

    \item We define a new evaluation metric for generative model prediction sets, called structural complexity.
    Structural complexity measures the prediction set's geometric intricacy and
    reflects how easily the prediction set can be interpreted and used for decision-making.
    
    \item We propose a new CP method, CP4Gen, following our approach.
    CP4Gen introduces a tunable hyper-parameter which controls resulting set's structural complexity, 
    rendering PCP as one special case.
    
    \item Evaluated on a variety of datasets, we show that CP4Gen outperforms state-of-the-art PCP
    in terms of prediction set volume and structural complexity, and consistently demonstrates superior performance on responses of various dimensionality.
    
    
\end{itemize}





\section{Related Works}

\textbf{Conformal Prediction with Deterministic Predictions}
Conformal prediction was conventionally designed for deterministic models, which produce a single regression prediction for each input.
Literature explicitly leveraging output geometry structure to construct prediction sets is extensive.
For instance, \citet{johnstone2021, messoudi2022} 
approximate output residual distribution with a Gaussian model, resulting in ellipsoidal prediction sets.
\citet{messoudi2021} uses copulas to capture non-linear dependencies across output dimensions, and \citet{tumu2024} constructs prediction regions using a union of convex templates for multi-modal coverage.
However, these methods produce global prediction sets that do not adapt to input variations.
Another line of approaches leverages transport maps to construct prediction sets.
Essentially, these methods transform a complex residual distribution into a simple reference distribution,
so that well-defined quantile sets from the reference distribution can be pulled back to form conformal prediction sets in the original output space.
~\citet{klein2025, thurin2025} build such transport maps using optimal transport, and 
~\citet{feldman2023, fang2024} learn them using generative models, such as variational autoencoders~\citep{kingma2022} and normalizing flows~\citep{rezende2015}.

\textbf{Conformal Prediction with Conditional Distribution Predictions}
\label{pp:cp_cdp}
In many settings, predictive models output a full conditional distribution given the input. 
A common approach is to design a family of nested sets on the output distributions and then select the smallest sets within the family that achieves the desired marginal coverage.
In multiclass classification, where models output a probability mass function over a finite label set,
CP methods return a prediction set consisting of all labels whose total probability exceeds a calibrated threshold~\citep{sadinle2019}. 
In the case of univariate continuous response,
~\citet{chernozhukov2021} and~\citet{sesia2021} use estimated conditional cumulative distribution functions to produce valid prediction intervals.
In multivariate continuous settings, CD-split~\citep{izbicki2019} constructs prediction sets by thresholding an estimated conditional density, and~\citet{izbicki2021} refine CD-split to improve conditional coverage. 


\textbf{Conformal Prediction with Generative Models}
Modern generative models fall into two classes: 
those with tractable conditional densities and those with intractable conditional densities.
The tractable class, including normalizing flows~\citep{rezende2015} and flow matching models~\citep{lipman2024}, 
permits point-wise evaluation of conditional density functions, 
and the conformal methods with distribution predictions apply. 
The intractable class, including score-based diffusion models~\citep{ho2020, song2021} and other implicit samplers, 
lies between the tractable generative models and deterministic predictors by supplying an ensemble of finite samples per input, 
yet do not provide an explicit conditional density. 
It positions us in a new regime where previous conformal methods are not applicable,
and motivates the development of new methods that can exploit ensembles to efficiently calibrate uncertainty over generated samples.

\section{Methodology}
\subsection{Problem Setup}
Consider a dataset consisting of $N$ i.i.d. sample pairs of covariates $X_i$ and target variables $Y_i$, 
denoted as $\textbf{D} = \{(X_i, Y_i)\}_{i=1}^N$.
Each sample pair $(X_i, Y_i)$ is generated from some unknown joint distribution $P_{XY}$.
When the covariates $X_{N+1}$ of a new data point is given,
the goal is to construct a prediction set $\hat{C}_{\alpha}(X_{N+1})$ for the unobserved target $Y_{N+1}$ with valid uncertainty estimation using the dataset $\textbf{D}$.
Specifically, the prediction set $\hat{C}_{\alpha}(\cdot)$ should satisfy the following criteria:
\begin{equation}
    \label{eq:cp-criteria}
    P_{XY}(Y_{N+1} \in \hat{C}_{\alpha}(X_{N+1})) \geq 1 - \alpha,
\end{equation}
for any $\alpha \in [0, 1]$.
Obviously, an arbitarily large prediction set can trivially satisfy the above criteria. 
We thus require the prediction set to be as small as possible.

Classic conformal prediction uses leave-one-out estimation~\citep{vovk2005}, 
which has high computational cost due to repeated prediction model fitting. 
Our work is instead based on the split conformal prediction framework~\citep{lei2017, papadopoulos2002}, which significantly reduces the computational cost by splitting the dataset and only fitting the prediction model once.
Specifically, the dataset $\textbf{D}$ is randomly split into a preliminary set $\textbf{D}_p$ and a calibration set $\textbf{D}_c$.
The prediction model is fit on $\textbf{D}_p$, 
then kept fixed to compute the nonconformity scores on $\textbf{D}_c$ and to predict at a new data point $X_{N+1}$.

\subsection{Generative Modeling}
In our setup, the prediction model is a conditional generative model $q(Y|X)$ that approximates the target variable conditional distribution $P_{Y|X}$.
The prediction set is then constructed based on the random samples drawn from the generative model given a new set of covariates as input.
This setup is the same as the one in~\citet{wang2022_a} and~\citet{zheng2024}.
Note that it is a different approach than the conventional conformal prediction methods
that are based on summary statistics of the target distribution such as the mean and quantiles~\citep{lei2017, romano2019}
and rely on evaluating the whole probability densities of the target distribution~\citep{chernozhukov2021, hoff2021, izbicki2019}.

The simple prerequisite of having samples from a target distribution allows our method to be compatible with all the popular generative models,
such as variational autoencoders~\citep{kingma2022}, generative adversarial networks~\citep{goodfellow2014}, and flow-based models~\citep{lipman2024}.
It also makes our method applicable to a wide range of real-world applications, 
where generative modeling is well-suited,
such as weather forecasting~\citep{price2025, yang2025}, 
climate downscaling~\citep{yang2022,yang2024,harder2023}, natural hazards mitigation~\citep{giezendanner2023, saunders2025}, sequence modeling~\citep{yang2024a},
protein structure prediction~\citep{watson2023},
and image generation~\citep{karras2022}.

In this work, we demonstrate our method on various target conditional distribution uncertainty estimation tasks, using a flow-matching model -- a popular conditional generative model (model details in  \Cref{app:tech_details_model}).

\subsection{The Density Estimation View of Conformal Prediction}
Given a preliminary set $\textbf{D}_p$, a conditional generative model $q(Y|X)$ can be trained to approximate the target conditional distribution $P_{Y|X}$.
For a data point $(X_{i}, Y_{i})$ in the calibration set $\textbf{D}_c$,
one may define a simple nonconformity score as:
\begin{equation}
    s(X_i, Y_i | q) = - \log q(Y_i|X_i),
\end{equation}
where $s(\cdot | q): \mathcal{X} \times \mathcal{Y} \to \mathbb{R}$ is the nonconformity score function associated with the generative model $q$.
$s(\cdot | q)$ is an intuitive choice of score function for a generative model,
as it measures the discrepancy between the target variable observation and the estimated conditional distribution.
Specifically, a small score $s(X_i, Y_i | q)$, i.e., a large $q(Y_i|X_i)$, 
indicates that the target variable $Y_i$ is well-explained by the generative model $q$ and 
thus conforms well to the estimated conditional distribution $q(Y|X_i)$.

The empirical quantile of the nonconformity scores on the calibration set $\textbf{D}_c$ is then computed to construct the prediction set for new data points.
The $\alpha$-th quantile of the nonconformity scores is given by:
\begin{equation}
    \begin{aligned}
    & Q_{\alpha}(\left\{s(X_i, Y_i | q)\right\}_{i=1}^N) \\
    & = \inf_{x} \left\{ \frac{1}{N} \sum_{i=1}^N \mathbb{I}\left(s\left(X_i, Y_i | q\right) \leq x\right) \geq \alpha \right\},
    \end{aligned}
\end{equation}
where $\alpha \in [0, 1]$, $\mathbb{I}(\cdot)$ is the indicator function,
and $N$ is the number of samples in the calibration set $\textbf{D}_c$.
Suppose that the desired nominal coverage is $1-\alpha$.
Given a new data point $X_{N+1}$, the prediction set is then constructed by:
\begin{equation}\label{eq:conformal-set}
    \begin{aligned}
    \hat{C}_{\alpha}(X_{N+1}) = \bigg\{ Y : s(X_{N+1}, Y| q) \leq \\
     Q_{1-\alpha}\left(\left\{s\left(X_i, Y_i | q\right)\right\}_{i=1}^N \cup \{\infty\}\right) \bigg\}.
    \end{aligned}
\end{equation}
However, for most generative models, we have no closed-form expression for $q(Y|X_i)$.
Instead, we only have access to samples drawn from $q(Y|X_i)$ through the generative model.
Assume we have $M$ random samples, $\hat{Y}_{i}^{1}$, \dots, $\hat{Y}_{i}^{M}$, independently generated from $q(Y|X_i)$,
denoted as $\hat{\textbf{Y}}_{i} = \{\hat{Y}_{i}^{1}, \dots, \hat{Y}_{i}^{M}\}$.
Then the problem boils down to estimating the conditional density function of $q(Y|X_i)$ from empirical samples $\hat{\textbf{Y}}_{i}$,
and characterizing its level sets to construct prediction regions.
While density estimation has been well-studied through methods such as kernel density estimation and finite mixture modeling~\citep{rosenblatt1956, parzen1962, mclachlan2000},
tracking density level sets introduces additional challenges, as discussed in the following sections.

\subsection{Instantiations}
\label{sec:framework}
Varying how we estimate the density function of $q(Y|X_i)$ leads to different nonconformity score function designs, and thus different conformal prediction methods.
In this section, we first instantiate a specific density estimator and build its connection to an existing conformal prediction method.
We then identify key limitations of this approach and propose a more efficient alternative.

\subsubsection{Probabilistic Conformal Prediction (PCP)}
PCP~\citep{wang2022_a} is a conformal prediction method whose nonconformity score is defined as the minimum distance from $Y_i$ to the ensemble samples $\hat{Y}_{i}^{m}$ (Equation~\ref{eq:pcp-score-3}). 
This leads to a prediction set consisting of $M$ balls, each with the same radius and centered at an ensemble prediction.
Within our proposed framework, where the nonconformity score arises from sample-based density estimation, PCP can be viewed as approximating the ensemble distribution with a Gaussian mixture model (GMM) of $M$ modes. 
Each Gaussian component is centered at a generated sample $\hat{Y}_{i}^{m}$ and uses the same diagonal covariance matrix $\Sigma = \sigma^2 I$.
Its nonconformity score can be rewritten as:
\begin{subequations}\label{eq:pcp-score}
\begin{align}
        s(X_i, Y_i | \hat{q}) &= - \log \sum_{m=1}^M \frac{1}{M}\ \mathcal{N}(Y_i; \hat{Y}_{i}^{m}, \sigma^2 I) \label{eq:pcp-score-1}\\
        &\approx - \log\underset{1\leq m\leq M}{\max} \frac{1}{M}\ \mathcal{N}(Y_i; \hat{Y}_{i}^{m}, \sigma^2 I) \label{eq:pcp-score-2}\\
        &\stackrel{\text{\scalebox{.7}{rank}}}{\sim}  \min_{1\leq m\leq M} \big\|Y_{i}-\hat{Y}_{i}^{m}\big\|.\label{eq:pcp-score-3}
\end{align}
\end{subequations}
Note the GMM density is approximated by the largest mode (Equation~\ref{eq:pcp-score-2}). 
It leads to a closed-form expression for the resulting prediction set,
which can be computed efficiently as a union of $M$ convex sets.


\subsubsection{Conformal Prediction for Generative Models (CP4Gen)}
One obvious drawback of PCP is that its prediction sets fail to concentrate on high-density regions.
PCP constructs the prediction set as a union of $M$ balls, 
of which each has the same radius.
Since the balls are uniformly sized regardless of local density, balls centered on outliers inflate the set into low probability regions, wasting volume, while balls in dense regions may inadequately capture the local structure.
In addition, the prediction set structural complexity (i.e., the number of Gaussian modes) is $M$, 
which makes it impractical to use when $M$ is large. 
More descriptions on structural complexity are provided in Section~\ref{sec:metrics}.

In light of the above drawbacks, we propose a new conformal prediction method for generative models denoted as CP4Gen,
which is more robust to outliers and has lower structural complexity. 
A visual overview of the proposed method is provided in Figure~\ref{fig:overview}.
CP4Gen first fits $K$-means on the $M$ generated outputs.
It treats each found cluster as a mode of a Gaussian mixture model, 
and computes the cluster's sample mean $\boldsymbol{\mu}_k$, covariance $\boldsymbol{\Sigma}_k$, 
and weight $w_k$ (i.e., the proportion of samples in the $k$-th cluster).
This leads to the following nonconformity score function:
\begin{align}
\label{eq:cp4gen-score}
    s(X_i, Y_i | \hat{q}) &= - \log \sum_{k=1}^K w_k\ \mathcal{N}(Y_i; \boldsymbol{\mu}_k, \boldsymbol{\Sigma}_k) \\
    &\approx - \log\underset{1\leq k\leq K}{\max} w_k\ \mathcal{N}(Y_i; \boldsymbol{\mu}_k, \boldsymbol{\Sigma}_k + \beta^2 I),\nonumber
\end{align}
where $\beta^2 I$ is a small constant diagonal matrix that ensures the covariance matrix is positive-definite and can be removed when $K$ is relatively small compared to $M$.
$K$ is a hyperparameter that controls the model's sensitivity to outliers and the resulting prediction set's structural complexity.
Note that when $K=M$, CP4Gen becomes PCP, where each ensemble prediction is regarded a cluster.
In practice, the hyperparameter $K$ can be fine-tuned on $\textbf{D}_p$ to balance prediction set volume and structural complexity.
To keep the tuning cost low, $K$ is searched on a coarse grid.
In our following experiments, $K$ is optimized to minimize the prediction set volume on $\textbf{D}_p$.
The $K$ corresponding to the smallest volume is used for the testing stage.

Algorithm~\ref{alg:cp4gen} summarizes all the details for applying Equation~\eqref{eq:cp4gen-score} to construct prediction sets.
In this algorithm, we use $K$-means clustering with Euclidean distance as a pre-processing step for fitting a Gaussian mixture model of the conditional distribution.
It yields centroids equal to the local mean, ensuring that the clusters are consistent with the downstream density modeling.
We also experimented with expectation maximization (EM) for Gaussian mixture fitting, but EM has more computation overhead, and the final performance is similar to the $K$-means one.

\subsection{Analysis}
As presented in Section~\ref{sec:framework}, both PCP and CP4Gen first fit a conditional density using a prediction ensemble, and then compute nonconformity score from the fit. They differ in how the conditional density is estimated: PCP employs a kernel density estimator with an undesirably small bandwidth $\sigma^2$, such that the estimated density remains biased for all $M$, whereas CP4Gen uses a Gaussian mixture model, yielding a more consistent density estimate. When the conditional distribution is an input-ignorant Gaussian distribution, CP4Gen produces a conformal set that converges to the highest-density set of the true Gaussian distribution and is asymptotically sharper than PCP's. This result is formalized in Proposition~\ref{prop:sharpness}, with the proof provided in Appendix~\ref{app:theory}.

\begin{proposition}\label{prop:sharpness}
Assume the true conditional distribution $P(Y|X)$ is uni-mode Gaussian with mean $\mu$ and covariance matrix $\Sigma$, and the generative model outputs i.i.d. ensemble data from $P(Y|X)$, then the conformal set given by CP4Gen gets asymptotically sharper as $M$ increases, while the set given by PCP does not.
\end{proposition}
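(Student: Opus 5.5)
We treat the input-ignorant case, so $P(Y|X)=\mathcal{N}(\mu,\Sigma)$ for every $X$, in ambient dimension $d$. We analyse the population-level limit of both conformal sets as $M\to\infty$, with the calibration size $N$ also taken large enough that empirical score quantiles converge to population quantiles. The natural benchmark is the oracle highest-density set
\[
C^\star=\{y:(y-\mu)^\top\Sigma^{-1}(y-\mu)\le \chi^2_{d,1-\alpha}\}.
\]
Among all sets with $P$-coverage $1-\alpha$, this set has minimal Lebesgue volume; this follows from the Neyman--Pearson-type argument $\int_{C}p\le\int_{C^\star}p$ when $\mathrm{vol}(C)\le\mathrm{vol}(C^\star)$. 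We read ``asymptotically sharper'' as follows: the CP4Gen set converges to $C^\star$, while the PCP set stays bounded away from $C^\star$ in volume, or does not converge at all.

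\textbf{CP4Gen.} Take $K=1$, which is the correct structural choice for a unimode target; fixed $K\ll M$ should work similarly. Then $\hat\mu_M\to\mu$ and $\hat\Sigma_M\to\Sigma$ almost surely by the strong law of large numbers. The score is $-\log\mathcal N(Y;\hat\mu,\hat\Sigma)$, which is a monotone transform of the Mahalanobis distance $(Y-\hat\mu)^\top\hat\Sigma^{-1}(Y-\hat\mu)$ plus a constant that depends on $\hat\Sigma$. With $\beta\to0$ or $\beta=0$, this distance converges in distribution to $\chi^2_d$, by Slutsky applied to a fresh $Y\sim\mathcal N(\mu,\Sigma)$ independent of the ensemble. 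The $\chi^2_d$ cdf is continuous and strictly increasing, so the $(1-\alpha)$ calibration quantile converges to $\chi^2_{d,1-\alpha}$. The test set is then an ellipsoid with center $\hat\mu$, shape $\hat\Sigma$ and radius converging to $\chi^2_{d,1-\alpha}$, so it converges to $C^\star$. Its volume converges to $\mathrm{vol}(C^\star)$, and its symmetric difference with $C^\star$ tends to $0$.

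\textbf{PCP.} The score is $\min_m\|Y-\hat Y^m\|$ and the set is $\bigcup_m B(\hat Y^m,r_M)$, where $r_M$ is the $(1-\alpha)$ quantile of the nearest-neighbour distance. The key step is the asymptotics of $r_M$. For a fresh $Y$ with density $p$, we have $P(\min_m\|Y-\hat Y^m\|>r\mid Y)\approx\exp(-Mp(Y)v_dr^d)$, where $v_d$ is the unit-ball volume. Hence $r_M\asymp M^{-1/d}$. More precisely, $Mv_dr_M^d\to t_\alpha$, where $t_\alpha$ solves
\[
\mathbb E\bigl[1-e^{-t\,p(Y)}\bigr]=1-\alpha .
\]
The union of $M$ balls of radius $r_M$ centered at Poisson-like points then covers a fraction $1-e^{-t_\alpha p(y)}$ of the space near each $y$. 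So the PCP set does not converge to any fixed set. It stays a random ``sponge'' that spreads over the whole support, including low-density tails where outlier samples sit. Its expected volume is $\int(1-e^{-t_\alpha p(y)})\,dy$, computed by a Poisson-approximation (Boolean-model) argument. Finally, we compare this volume with $\mathrm{vol}(C^\star)$. Because the coverage of the PCP set is attained by a soft weight $g(y)=1-e^{-t_\alpha p(y)}$ rather than the indicator $\mathbb I(p\ge c^\star)$, the bathtub principle gives strict inequality: $\int g>\mathrm{vol}(C^\star)$ whenever $\int gp=1-\alpha$ and $g\not\equiv\mathbb I_{C^\star}$. The PCP volume is therefore asymptotically strictly larger, while the CP4Gen volume converges to the optimum.

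The main obstacle is making the PCP limit rigorous. This requires justifying the Poisson approximation for nearest-neighbour distances uniformly enough to interchange the limit with the quantile, and computing the asymptotic expected volume of the union of balls. I would first try the standard de-Poissonization plus dominated-convergence argument, truncating to a large compact set and using Gaussian tails to control the remainder. If that proves too delicate, a weaker fallback suffices for the claim: show that $\liminf\mathrm{vol}>\mathrm{vol}(C^\star)$, using only that the PCP set has coverage $1-\alpha$ yet is not asymptotically equal to $C^\star$.
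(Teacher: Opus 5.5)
Your proof is correct, and for the PCP half it takes a genuinely different route from the paper's. The paper never analyses the union of balls itself. It notes that the minimum-distance score is rank-equivalent to the KDE log-max score, and replaces log-max by log-sum by taking $\sigma=o(M^{-1/d})$. It then observes that such a bandwidth violates the KDE consistency condition $M\sigma^d\to\infty$, and asserts that the level sets of this inconsistent estimate differ from the HD set on a set of positive measure. The random-set part of Lemma~\ref{lem:HPD}, which is exactly your bathtub principle applied to $g(y)=P(y\in\hat C_\alpha)$, then gives a strictly larger expected volume.

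You instead work with the exact nearest-neighbour law. You obtain $Mv_dr_M^d\to t_\alpha$ and the explicit limiting inclusion probability $g(y)=1-e^{-t_\alpha p(y)}\in(0,1)$, after which $\int g>\lambda(C^\star)$ is immediate. This buys you three things:
\begin{itemize}
\item an argument about the actual PCP set rather than a surrogate (the log-max set is a union of equal balls for every $\sigma$, so the bandwidth only enters through the log-sum approximation);
\item a rigorous replacement for the step the paper asserts rather than derives;
\item an explicit plateau $\int(1-e^{-t_\alpha p(y)})\,dy$, which accounts for the saturation of PCP's volume in Figure~\ref{fig:ens_size_analysis}.
\end{itemize}
The paper's route, in exchange, slots PCP into the unified plug-in density-estimation view of Section~\ref{sec:framework}. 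For CP4Gen the two arguments coincide in substance: a consistent $K=1$ Gaussian fit makes the set converge to the HD ellipsoid. Your SLLN-plus-Slutsky version suffices; the paper's delta-method CLT gives more than is needed.

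\textbf{Remark 1 (rigour).} The obstacle you flag is milder than you expect. Because the ensemble is i.i.d., $P\bigl(y\notin\bigcup_m B(\hat Y^m,r)\bigr)=\bigl(1-P(B(y,r))\bigr)^M$ holds exactly, so no de-Poissonization is needed.
\begin{itemize}
\item Quantile limit: bounded convergence gives pointwise convergence of the distribution function of $Mv_d\min_m\|Y-\hat Y^m\|^d$ to the continuous, strictly increasing map $t\mapsto\mathbb E[1-e^{-tp(Y)}]$, hence the quantile converges.
\item Volume limit: the bound $1-(1-q)^M\le Mq$, together with $\int M\,P(B(y,r_M))\,dy=Mv_dr_M^d\to t_\alpha=\int t_\alpha p$, lets the generalized dominated convergence theorem finish.
\end{itemize}

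\textbf{Remark 2 (drop the $K\ll M$ aside).} The aside that fixed $K\ll M$ ``should work similarly'' is not justified. For $K\ge2$ the $K$-means cells are truncated pieces of the single Gaussian. The limiting max-component set is then a fixed union of $K$ ellipsoids that in general differs from $C^\star$; for example, with $K=2$ and $d\ge2$, two offset ellipsoids cannot union to an ellipsoid. By the deterministic part of Lemma~\ref{lem:HPD}, its volume therefore stays strictly above $\lambda(C^\star)$. The claim genuinely uses $K=1$, and $\beta\to0$ as you note. This is also what the paper implicitly assumes when it says the GMM log-sum and log-max scores coincide.
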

A similar argument extends to the case where true conditional distribution is a mixture of well-separated Gaussians. For more general non-Gaussian targets, both PCP and CP4Gen suffer from biased density estimates. However, CP4Gen can flexibly adjust its expressivity through the choice of $K$, leading to reduced bias and tighter prediction sets. 

\begin{figure*}[hpbt]
    \centering
    \includegraphics[width=.98\linewidth]{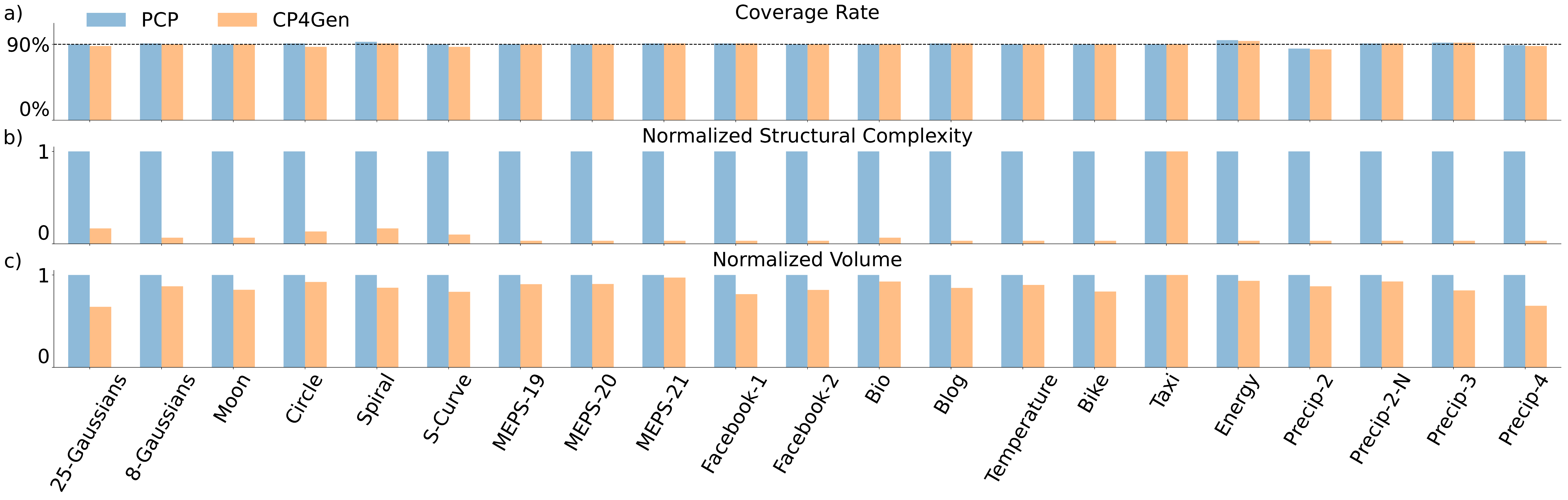}
    \caption{\textbf{A comprehensive performance summary of CP methods.}
    CP methods ($\alpha=0.1$) are extensively evaluated on various datasets including synthetic, real-world, and precipitation emulation tasks.
    The performance of PCP and CP4Gen are compared in terms of prediction set coverage rate (a), structural complexity (b), and volume (c).
    The results of structural complexity and volume are normalized to $[0, 1]$ for visualization convenience.
    Results in original scale are reported in the appendix Tables~\ref{tab:results:Synthetic}, \ref{tab:results:Real}, and \ref{tab:results:Precip}.
    Both CP methods achieve desired coverage rate close to $1-\alpha=0.9$ on all datasets.
    CP4Gen consistently outperforms PCP in terms of both prediction set volume and structural complexity.
    In particular, the CP4Gen produced prediction sets' structural complexity is significantly lower than PCP's,
    reducing complexity by more than $90\%$ on most datasets.
    }
    \label{fig:metric_comparison}
    \vspace{-.5cm}
\end{figure*}

\section{Experiments}

\subsection{Evaluation Metrics}
\label{sec:metrics}
We evaluate the conformal prediction set based on the following three metrics:
empirical coverage rate, prediction set volume, and prediction set structural complexity.

\textbf{Empirical Coverage Rate}
Conformal prediction methods guarantee a marginal coverage rate in theory under the exchangeability assumption.
Here we use the empirical coverage rate to examine how well this theoretical guarantee holds in practice.
In this work, we set the significance level $\alpha = 0.1$ for all the experiments and therefore expect an empirical coverage rate close to $90\%$.

\textbf{Prediction Set Volume}
In addition to ensuring valid coverage, efficiency is another key evaluation metric of CP performance. 
It is assessed by the volume of the prediction set $|\hat{C}_\alpha(X_{N+1})|$. 
While an arbitrarily large prediction set would guarantee coverage, 
it offers little practical value for decision-making.
As a result, a prediction set is optimal if it minimizes the volume while maintaining the coverage.
The set volume can be computed exactly in $1$-d response space (i.e. the length of intervals).
In higher dimensions, we approximate the volume with Monte Carlo sampling.

\textbf{Prediction Set Structural Complexity}
Besides the two commonly used metrics, coverage and volume,
we also consider the structural complexity of the prediction set.
It is defined as the number of unique convex sets whose union is the prediction set.
It measures the shape complexity of prediction sets 
and tells us how easily the prediction set can be interpreted and used for downstream decision-making.
For example, it is more informative and actionable for a wind farm operator to know that next hour average wind speed falls within the range $[10, 20]\ \text{m/s}$ than within
$[10, 14.01] \cup [14.05, 16.08] \cup [17, 19.32] \cup [19.5, 20]\ \text{m/s}$.
It would be unclear whether the multi-modal nature of the second set is statistically significant or just a spurious consequence of overfitting.
Also, when prediction sets are used for risk assessment and robust optimization in high-stakes settings,
one must identify the worst/best case point within the prediction region.
The optimization overhead would be much higher on complex sets than on simple ones.
In the case of convex optimization, the time complexity scales linearly with the number of convex sets~\citep{johnstone2021, bertsimas2011}.
In our wind example, the second set would take 4 times longer to optimize than the first set.

\subsection{Computational Complexity}
CP4Gen is a computationally efficient conformal prediction method.
All steps are tractable in high dimensions:
$K$-means clustering is operated in low-dimensional distance metrics;
both evaluating and inverting the Gaussian distribution density have analytic solutions.
Therefore, CP4Gen can be applied to high-dimensional response spaces with moderate computational overhead.
The resulting prediction set is the union of several ellipsoids,
whose volume does not have a closed-form expression.
It needs to be estimated by Monte Carlo sampling which suffers from curse of dimensionality.
However, it is only necessary when evaluating CP4Gen performance in terms of sharpness.

The main computational complexity of CP4Gen comes from inverting the estimated covariance matrix associated with each cluster.
In high-dimensional settings, inverting a full covariance matrix is computationally expensive and numerically unstable, 
especially when the ensemble size is small compared to dimensionality $d$.
A natural remedy is to impose a diagonal or low-rank-plus-diagonal structure on the covariance matrix,
which leads to a more efficient inverse computation.
While some expressive power is lost relative to a full covariance, 
the reduction in prediction set volume of CP4Gen is largely driven by the mixture decomposition and conditional scoring rather than the exact covariance parameterization. 
As a result, diagonal or low-rank covariances will retain strong volume reduction benefits while ensuring computational feasibility in high dimensions.


\subsection{Datasets}
\label{subsec:data}
We compare CP4Gen to the state-of-the-art CP methods PCP and HD-PCP~\citep{wang2022_a}
on various synthetic and real-world datasets.
HD-PCP requires additional per-sample confidence scores, making it a unfair comparison against CP4Gen.
Thus, we save this comparison in \Cref{app:CP4Gen_vs_HD-PCP}.

\textbf{One-dimensional Response}
The synthetic datasets include classic $2$-d data such as \texttt{25-Gaussians}, \texttt{8-Gaussians}, \texttt{Moon}, \texttt{Circle}, \texttt{Spiral}, and \texttt{S-Curve},
where one dimension is the input and the other is the response.
Real data experiments are conducted on nine public-domain datasets: 
bike sharing data (\texttt{Bike}), physicochemical properties of protein tertiary structure (\texttt{Bio}), blog feedback (\texttt{Blog}), 
and Facebook comment volume, variants one (\texttt{Facebook-1}) and two (\texttt{Facebook-2}), medical expenditure panel survey number 19 (\texttt{MEPS-19}), number 20 (\texttt{MEPS-20}), and number 21 (\texttt{MEPS-21})~\citep{romano2019}, 
and temperature forecast data (\texttt{Temperature}).
Experiments on these nine datasets take in multi-dimensional covariates and output a single response (c.f. \Cref{app:tech_details_dataset} for further details on the datasets).
These nine public-domain datasets are the same ones which were used in~\citet{wang2022_a} for evaluation of PCP.
They are structurally diverse and represent a variety of domains including physics, biology, and social science.
We here use the same datasets to permit a proper benchmarking of CP methods in the context of generative models.

\textbf{Two-dimensional Response}
Beyond single-response tasks, we also study the multi-response generation setting, where the response is a vector of multiple variables.
For this setting, we consider two real-world datasets: \texttt{Taxi} and \texttt{Energy}.
The \texttt{Taxi}~\citep{NYC_Yellow_Taxi} dataset consists of New York City taxi trip records which include the pickup, drop-off locations of each trip and the corresponding time. 
We randomly sample 10000 records from January 2016 (6000, 2000, 2000 for train, calibration and test respectively), 
and use the pickup time and location as covariates to predict drop-off locations. 
The locations are represented by latitude and longitude pairs.
For the \texttt{Energy} dataset, we predict the heating load and cooling load for energy efficiency analysis~\citep{tsanas2012}. 
The predictors are building information such as orientation, glazing area, and wall area. 
We use 460, 154, and 154 samples for train, calibration and test set.
\begin{figure*}[hptb]
    \vspace{-.2cm}
    \centering

    \includegraphics[width=.9\textwidth]{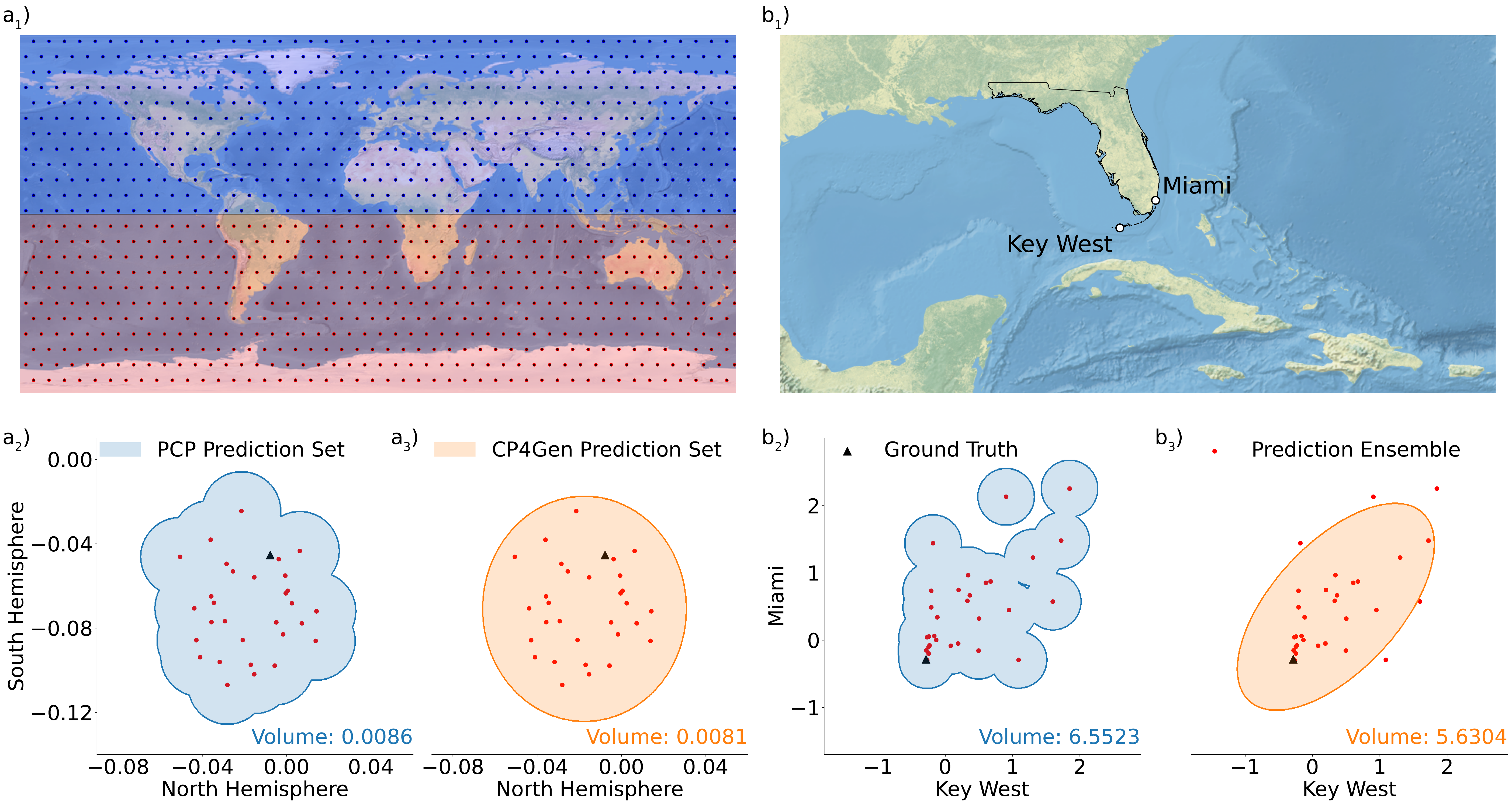}
    \caption{\textbf{Dimensionality Reduction and Prediction Set Demonstration on (a) Precip-2 and (b) Precip-2-N.}
    (a$_{1}$) Precip-2 is obtained by taking average of precipitation values over north (blue shaded) and south (red shaded) hemispheres.
    (b$_{1}$) Precip-2-N is by taking precipitation values at Miami and Key West.
    (a$_{\text{2-3}}$ and b$_{\text{2-3}}$) On both datasets, CP4Gen gives a smaller and simpler prediction set.
    The volume advantage is more evident when response two dimensions are correlated.
    }
    \label{fig:mengze_case_study}
\vspace{-.3cm}
\end{figure*}

\textbf{High-dimensional Response: Climate Emulation}
Lastly, we evaluate our method on a climate emulation task, built from climate simulations \citep[MPI]{olonscheck2023_thisone} (c.f. \Cref{app:tech_details_dataset} for details).
The goal is to model the response distribution of June daily mean precipitation given June monthly globally averaged temperature of that year.
The response is a vector of precipitation values on a global grid of $1.865^\circ \times 1.875^\circ$ resolution, resulting in a response vector of size $96 \times 192$.

In this work, to ease the prediction set volume evaluation, 
dimensionality reduction is applied to the high-dimensional responses from the precipitation emulation task.
Four different dimensionality reduction methods are considered.
\texttt{Precip-2} reduces the response to a 2-d vector by taking the mean of precipitation over the North and South Hemispheres separately.
\texttt{Precip-2-N} (nearby) also reduces the response to a 2-d vector, but does so by taking precipitation values at two nearby cities: Miami and Key West.
\texttt{Precip-3} returns a 3-d response vector by taking the mean of precipitation over the north, nouth and central bands of the globe separately.
\texttt{Precip-4} reduces the response to a 4-d vector by taking the mean of precipitation over four quadrants separately.
CP is then performed on each kind of the reduced-dimensional responses.

\vspace{-.3cm}
\section{Results}
CP4Gen's performance on all the datasets is summarized in Figure~\ref{fig:metric_comparison}.
Metric numerics are reported in Tables~\ref{tab:results:Synthetic}, \ref{tab:results:Real}, and \ref{tab:results:Precip}.

\textbf{CP4Gen prediction sets are both lower complexity and lower volume.}
In all the datasets, CP4Gen's empirical coverage rate is very close to the theoretical marginal coverage rate of $1-\alpha=0.9$ (c.f. Figure~\ref{fig:metric_comparison}a).
This verifies that CP4Gen's prediction set is valid.
Given the similar coverage rate,
CP4Gen's prediction set volume and structural complexity are generally lower than PCP's.
In our experiments, the prediction ensemble size $M$ is set to 30 for all the datasets.
An ablation study on $M$ is provided in \Cref{app:ens_size}.
Following Equation~\eqref{eq:pcp-score},
PCP naturally gives structural complexity of 30 in all the experiments.
On the other hand, even though the number of Gaussian components $K$ in CP4Gen is optimized for the set volume in each dataset,
the resultant structural complexity is still significantly less than 30, 
as illustrated by Figure~\ref{fig:metric_comparison}b on a normalized scale. 
One interesting result is that CP4Gen also recovers $K=30$ in the \texttt{Taxi} dataset.
 demonstrates that CP4Gen can match PCP in terms of structural complexity when it is necessary to minimize volume.
Since PCP is a special case of CP4Gen when $K=M$, unsurprisingly, 
PCP's prediction set volume is lower bounded by CP4Gen (Figure~\ref{fig:metric_comparison}c).
A visualization of the prediction sets for \texttt{S-Curve} and \texttt{25-Gaussians} is provided in Figure~\ref{fig:synthetic_case_study},
highlighting that CP4Gen's prediction set is of simpler structure and lower volume compared to PCP.
\begin{figure*}[htbp]
    \centering
    \includegraphics[width=.9\textwidth]{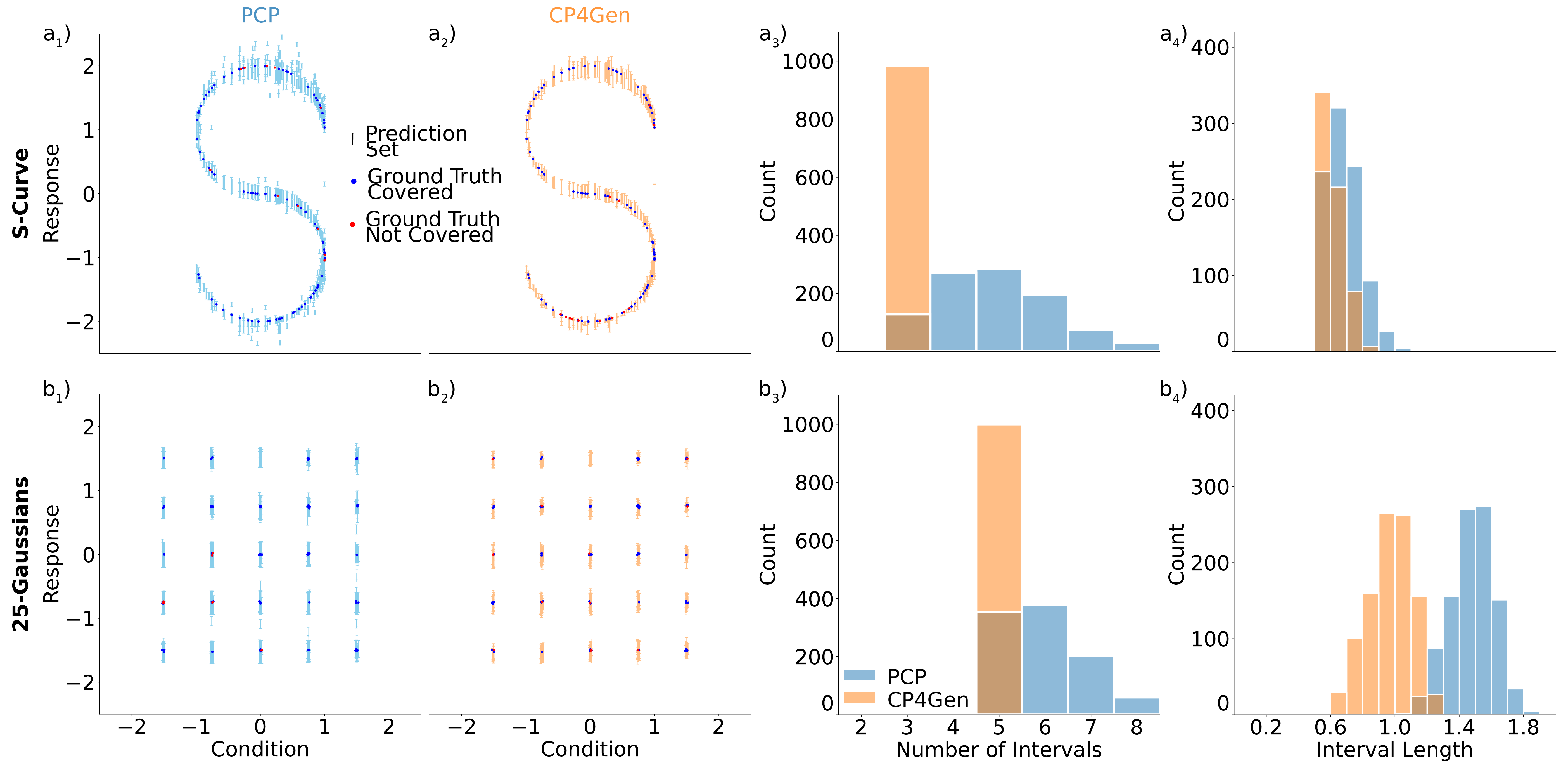}
    \caption{\textbf{Prediction set ($\alpha=0.1$) visualization on synthetic datasets: S-Curve and 25-Gaussians.}
    Results for (a) S-Curve and (b) 25-Gaussians.
    Prediction sets by PCP and CP4Gen are visualized for 100 test data samples at the first two columns,
    (a$_{1}$, b$_{1}$) for PCP and (a$_{2}$, b$_{2}$) for CP4Gen.
    In addition, the last two columns show the histogram of the number of disjoint intervals constructing each prediction set (a$_{3}$, b$_{3}$)
    and the histogram of the volume of each final prediction set (a$_{4}$, b$_{4}$).
    Clearly, CP4Gen produces prediction sets with lower structural complexity (i.e., fewer disjoint intervals) and smaller volume than PCP.
    }
    \label{fig:synthetic_case_study}
    \vspace{-.3cm}
\end{figure*}

\vspace{-.1cm}
\textbf{CP4Gen outperforms PCP on responses of various dimensionality.}
In the precipitation emulation task, 
the global precipitation response is reduced to 2-d, 3-d, and 4-d responses to facilitate the evaluation.
CP4Gen outperforms PCP on all versions of the reduced responses, 
with a much lower prediction set volume and simpler set structure.
In particular, volume reduction by CP4Gen gets more significant as the dimension increases.
It implies that CP4Gen is able to consistently construct high-quality prediction sets regardless of the response dimensionality.

\textbf{CP4Gen reduces the volume further when the response dimensions are correlated.}
Comparing the results on \texttt{Precip-2} and \texttt{Precip-2-N},
we observe that CP4Gen improves on PCP's prediction set volume further when the response has strongly correlated dimensions.
As can be observed on the case study shown in  Figure~\ref{fig:mengze_case_study},
for \texttt{Precip-2-N}, where the precipitation in Miami and Key West is strongly correlated,
CP4Gen reduces the prediction set volume by $14\%$ compared to PCP.
On the contrary, for \texttt{Precip-2}, where north and south hemispheres precipitation is weakly correlated,
the reduction is only $6\%$.

\textbf{A trade-off between structural complexity and volume.}
As shown in our experiments (e.g., Tables~\ref{tab:results:Synthetic}), 
increasing $K$ from 1 improves the fidelity of the density estimate and typically reduces volume, but only up to the point where additional clusters begin to overfit to the sampling noise. 
When $K$ becomes too large, mixture components gradually collapse onto individual samples, covariance estimates shrink, and the resulting prediction set becomes a union of many small ellipsoids. This sharply increases structural complexity with diminishing returns in volume reduction. 
Therefore, too low or too high CP4Gen complexity would lead to a over-conservative prediction.

\vspace{-3mm}
\section{Discussion and Conclusion}
\vspace{-1mm}
We present a conformal prediction approach tailored to generative models with ensemble outputs. The procedure first fits a conditional distribution to the ensemble and then computes nonconformity scores from the fit. Building on this design, we derive a novel method termed CP4Gen, which uses a $K$-component Gaussian mixture estimator and defines a nonconformity score as the negative log-density from the nearest mixture component. This choice significantly mitigates the sensitivity to outlier samples and curbs the prediction set structural complexity, yielding sets that are sharper, more interpretable, and easier to optimize for downstream applications.
\vspace{-1mm}

Empirically, CP4Gen produces prediction sets not only of simpler structure but also of smaller volume than PCP.
This is due to a fundamental difference between CP4Gen and PCP in terms of ensemble density estimation, 
where CP4Gen has extra capacity (by varying $K$) to ensure better convergence to ensemble distributions when $M$ increases, 
whereas PCP (with a fixed $K$ to $M$) is likely to maintain a distribution bias (which translates into a volume bias, e.g. \Cref{fig:ens_size_analysis}).
We analyze this phenomenon in a simplified setting and theoretically conclude that keeping $K$ properly small often improves sharpness
and yields prediction sets with simpler geometry (c.f. ~\Cref{app:theory}).
Below, we discuss several remaining directions for future work.

\vspace{-1mm}
\textbf{Density Estimation}
While the Gaussian mixture in CP4Gen is able to capture multimodal structure of ensemble distributions, there are still many distributional structures untapped such as spatial localization and Markovian dependence. 
They could be utilized by alternative density estimation methods to improve density estimation and produce sharper conformal sets in high dimensions~\citep{tipping1999, ghahramani1996, zargarbashi2023}.

\vspace{-1mm}
\textbf{Online Calibration}
CP4Gen assumes calibration and test exchangeability, which is usually violated by real world data. 
CP4Gen can be extended to accommodate distribution shift  by incorporating modern techniques from robust and online conformal prediction, enabling valid calibration in evolving generative regimes. These extensions would broaden the applicability of CP4Gen to streaming, sequential, and online prediction tasks~\citep{zaffran2022, xu2023, gibbs2024a, aolaritei2025}.

\vspace{-1mm}
\textbf{Conditional Guarantees}
When the conditional distribution $P(Y|X)$ varies with $X$, CP4Gen, like many conformal methods, guarantees only finite-sample marginal coverage but not exact or asymptotic conditional coverage. CP4Gen could be combined with remedies, such as localization and reweighting strategies \citep{guan2023, tibshirani2019_thisone}, for improved conditional coverage. 

Overall, our conformal prediction approach, demonstrated with CP4Gen, represents a significant advance in the intersection of conformal prediction and generative modeling. 
It offers researchers and practitioners a powerful tool for calibrating uncertainty associated with generative models.

    
    
    



\bibliography{conformal_references}
\bibliographystyle{icml2026}

\newpage
\appendix
\onecolumn

\section*{Appendix}

\section*{Outline}
The appendix is organized as follows:
\begin{itemize}
    \item In \Cref{app:algo}, we provide an algorithmic description of CP4Gen.
    \item In \Cref{app:tech_details}, we describe the datasets and generative model used in the experiments in greater details.
    \item In \Cref{app:eval_tab}, we present the original tabular data for CP method performance comparison in \Cref{fig:metric_comparison}.
    \item In \Cref{app:CP4Gen_vs_HD-PCP}, we present an additional comparison between CP4Gen and HD-PCP.
    \item In \Cref{app:ens_size}, we demonstrate an ablation study on the effect of generative ensemble size to CP method performance.
    \item In \Cref{app:theory}, we provide some theoretical results developed in this work and their proofs. 
\end{itemize}

\newpage
\section{Algorithmic Description of CP4Gen}
\label{app:algo}
The pseudo-code of implementing CP4Gen in practice is presented in Algorithm~\ref{alg:cp4gen}.

\newcommand{\LINECOMMENT}[1]{%
  \STATE {\ttfamily // #1}%
}

\begin{algorithm}[H]
    \caption{CP4Gen: Conformal Prediction for Generative Models via $K$-means Clustering}
    \label{alg:cp4gen}
    \begin{algorithmic}
    \STATE {\bfseries Input:} Dataset $\textbf{D}=\{(X_i,Y_i)\}_{i=1}^N$, split into preliminary set $\textbf{D}_p$ and calibration set $\textbf{D}_c$; nominal level $\alpha$; 
    
    ensemble size $M$; number of clusters $K$; nugget $\beta^2>0$.
    
    \STATE {\bfseries Output:} Predictive set $\widehat{C}_\alpha(X^\star)$ for a test covariate $X^\star$.
    
    \vspace{3mm}
    
    \STATE {\bfseries Step I: Fit Generative Model}
    \STATE Train a conditional generative model $q(Y|X)$ on $\textbf{D}_p$. \\
    \vspace{2mm}
    \STATE {\bfseries Step II: Compute Nonconformity Scores}\\
    \FOR{$(X_i,Y_i)\in\textbf{D}_c$}
        \LINECOMMENT {Draw an ensemble and cluster it}
        \STATE Sample $\hat{\textbf{Y}}_{i} = \{\hat{Y}_{i}^{1}, \dots, \hat{Y}_{i}^{M}\} \sim q(Y\!\mid\!X_i)$;
        \STATE Run $K$-means on $\hat{\textbf{Y}}_{i}$; 
        \STATE Obtain $K$ clusters, each with weight $w_{i}^{k}$, means $\boldsymbol{\mu}_{i}^{k}$, and covariances $\boldsymbol{\Sigma}_{i}^{k}$;
        \LINECOMMENT {Negative log mixture density score (with a small nugget)}
        \LINECOMMENT {Mixture density approximated by its dominant component}
        \LINECOMMENT {for easier prediction set inversion}
        \STATE $s_i \leftarrow -\log\!\left( \underset{1\leq k\leq K}{\max} w_{i}^{k}\,\mathcal{N}\!\big(Y_i;\,\boldsymbol{\mu}_{i}^{k},\,\boldsymbol{\Sigma}_{i}^{k}+\beta^2 I\big) \right)$.
    \ENDFOR
    
    Compute the $(1-\alpha)$ empirical quantile:
    \[
    Q_{1-\alpha} \;\leftarrow\; \mathrm{Quantile}_{1-\alpha}\big(\{s_i\}_{(X_i,Y_i)\in\textbf{D}_c}\cup\{\infty\}\big).
    \]

    \STATE {\bfseries Step III: Predictive set at test time}\\
    \STATE Sample $\hat{\textbf{Y}}_{\star} = \{\hat{Y}_{\star}^{1}, \dots, \hat{Y}_{\star}^{M}\} \sim q(Y\!\mid\!X^\star)$;
    
    \STATE Run $K$-means on $\hat{\textbf{Y}}_{\star}$ and obtain $\{w_{\star}^{k},\boldsymbol{\mu}_{\star}^{k},\boldsymbol{\Sigma}_{\star}^{k}\}_{k=1}^K$;
    
    \STATE Define the score at $Y$:
    \[
    s_\star(Y)\;=\;-\log\!\left(\underset{1\leq k\leq K}{\max} w_{\star}^{k}\,\mathcal{N}\!\big(Y;\,\boldsymbol{\mu}_{\star}^{k},\,\boldsymbol{\Sigma}_{\star}^{k}+\beta^2 I\big)\right);
    \]
    
    \STATE Return the prediction set:
    \begin{align*}
    \widehat{C}_\alpha(X^\star) 
    & =\;\big\{Y:s_\star(Y)\le Q_{1-\alpha}\,\big\} \\
    & = \bigcup_{k=1}^K \big\{Y:(Y-\boldsymbol{\mu}_{\star}^{k})^\top(\boldsymbol{\Sigma}_{\star}^{k}+\beta^2 I)^{-1}(Y-\boldsymbol{\mu}_{\star}^{k})\le r_k \big\},
    \end{align*}
    where $r_k$ is the threshold implied by $Q_{1-\alpha}$ for component $k$.
    
    
    \end{algorithmic}
\end{algorithm}
    \begin{remark}
        In Algorithm \ref{alg:cp4gen}, we employ $K$-means to fit the Gaussian mixture model. It is worth noting that $K$-means relies on hard assignments, which can yield biased estimates of cluster means even with infinite data, particularly when the mixture components are not well separated \citep{jin2015}. As an alternative, the expectation maximization (EM) algorithm is a widely used approach for density estimation; however, it typically requires a larger computational budget for convergence, and may still be trapped in a local minima of estimation \citep{jin2016}. In experiments, we find that $K$-means produces clean and fast density fits across all tested datasets. An interesting future direction is to investigate more efficient yet accurate alternatives to $K$-means for mixture modeling.
    \end{remark}

\newpage
\section{Technical Details}
\label{app:tech_details}
\subsection{Datasets}
\label{app:tech_details_dataset}
Here we provide more details on the datasets used in the experiments.
\subsubsection{Synthetic Dataset}
Synthetic datasets used in experiments are classic \texttt{25-Gaussians}, \texttt{8-Gaussians}, \texttt{Moon}, \texttt{Circle}, \texttt{Spiral}, and \texttt{S-Curve}.
Each dataset has two dimensions: one is treated as the input and the other as the response.
They are all generated from \texttt{scikit-learn} with the following parameters:
\begin{itemize}
    \item \texttt{25-Gaussians}: 25 Gaussian clusters evenly spaced on a $5 \times 5$ grid, each with standard deviation $\sigma=0.01$;
    \item \texttt{8-Gaussians}: 8 Gaussian clusters evenly spaced on a circle, each with standard deviation $\sigma=0.01$;
    \item \texttt{Moon}: two moon-shaped clusters with Gaussian noise level $\sigma=0.01$;
    \item \texttt{Circle}: two concentric circles with radius ratio $0.7:1$;
    \item \texttt{Spiral}: a spiral cluster as a slice of a 3-d swiss roll distribution;
    \item \texttt{S-Curve}: a S-curve cluster without Gaussian noise.
\end{itemize}
Each of them is generated with $5000$ samples with $3000$ for training, $1000$ for calibration, and $1000$ for testing.

\subsubsection{Real-World Dataset}
Real-world data experiments are conducted on 1-d response public-domain datasets: 
bike sharing data (\texttt{Bike}), 
physicochemical properties of protein tertiary structure (\texttt{Bio}), 
blog feedback (\texttt{Blog}), 
and Facebook comment volume, variants one (\texttt{Facebook-1}) and two (\texttt{Facebook-2}), 
medical expenditure panel survey number 19 (\texttt{MEPS-19}), number 20 (\texttt{MEPS-20}), and number 21 (\texttt{MEPS-21})~\citep{romano2019}, 
and temperature forecast data (\texttt{Temperature}).
Each of them consists of multiple columns of features and one column of response.
\begin{itemize}
    \item \texttt{Bike}: to predict the number of bike rentals given the weather and time\citep{fanaee-t2013}.
    \item \texttt{Bio}: to predict the solubility of protein given physicochemical properties\citep{rana2013}.
    \item \texttt{Blog}: to predict the number of comments given the blog features\citep{krisztianbuza2014}.
    \item \texttt{Facebook-1}: to predict the number of comments given the facebook post features, variant one\citep{kamaljotsingh2015}.
    \item \texttt{Facebook-2}:to predict the number of comments given the facebook post features, variant two.
    \item \texttt{MEPS-19}: to predict the medical expenditure given the patient features from panel survey number 19 \citep{MEPS2016}.
    \item \texttt{MEPS-20}: to predict the medical expenditure given the patient features from panel survey number 20.
    \item \texttt{MEPS-21}: to predict the medical expenditure given the patient features from panel survey number 21.
    \item \texttt{Temperature}: to predict next day air temperature given numerical weather forecasts and in-situ observations\citep{cho2020}.
\end{itemize}

Expriments are also conducted on 2-d response datasets: \texttt{Taxi} dataset and \texttt{Energy} dataset.
Their detailed descriptions are already provided in the main paper (c.f. \Cref{subsec:data}).
All datasets are splitted by a ratio $6:2:2$ for training, calibration, and testing. 

\subsubsection{Climate Emulation Dataset}
The CP methods are also evaluated on a climate emulation task,
which is built from a climate simulation dataset \citep{olonscheck2023_thisone}.
The goal is to model a response distribution of June daily mean precipitation given June monthly globally averaged temperature of that year.
The response is a vector of precipitation values on a global grid of $1.865^\circ \times 1.875^\circ$ resolution, resulting in a response vector of size $96 \times 192$.
The input is a scalar, which is the monthly and globally averaged temperature in June.
The training data are from a climate simulation from 1950 to 2100. 
The calibration set is the data from 50 new simulations spanning between 1950 and 2014;
and the rest from 2015 to 2100 is reserved as the test set.

\subsection{Conditional Generative Models}
\label{app:tech_details_model}
We used flow-matching~\citep{lipman2024} to train a conditional generative model on the synthetic and real-world datasets.
The conditional flow-matching generator is trained with an explicit Gaussian path.
For each condition-target $(x, y)$ pair, we create intermediate states by mixing the ground-truth target with Gaussian noise according to a time schedule: $y_{t} = t \cdot y + \sqrt{1 -t} \cdot \epsilon$, where $\epsilon$ is Gaussian standard noise.
This mixing is used to produces a vector field $v$ to flow a sample from a standard Gaussian distribution to a desired response conditional on $x$.
The vector field does not have a close form; it needs to be approximated ($v_\theta$), here with a simple MLP.
This MLP consists of five linear layers with ReLU activation and is trained by randomly sampling $t\sim\mathrm{Unif}[0,1]$, with loss function
\begin{equation}
    \mathcal{L} = \mathbb{E}_{t, \epsilon, y, x}\left[\| v_\theta(y_t, t, x) - v^{*}(y_{t}, t) \|^2 \right],
\end{equation}
\noindent and ground truth conditional flow $v^{*}(y_{t}, t) = \frac{d}{dt}y_{t}= y - \frac{\epsilon}{2 \sqrt{1 - t}}$.
At inference time, we start from standard Gaussian noise and integrate over the learned vector field from $t=0$ to $t=1$ using forward Euler with a fixed step, 
yielding conditionally generated samples. 
These flow-matching models are trained and sampled on an HPC cluster with RTX 8000 NVIDA GPUs.

Note that the samples generated for the climate emulation task stem from another flow matching-model detailed in~\citet{wang2025}.

\section{CP Method Evaluation in Tabular Format}
\label{app:eval_tab}
This section shows additional results for the CP methods on the previously described datasets.
Their performance is evaluated in terms of marginal coverage, prediction set volume, and structural complexity.
The detailed results are shown in \Cref{tab:results:Synthetic}, \Cref{tab:results:Real}, and \Cref{tab:results:Precip}.
They are also visualized in Figure~\ref{fig:metric_comparison} in the main paper.

\begin{table}[!h]
    \caption{\textbf{CP Performance on Synthetic Datasets.} 
    PCP and CP4Gen are evaluated on synthetic datasets: 25-Gaussians, 8-Gaussians, Moon, Circle, Spiral, and S-Curve 
    in terms of prediction set structural complexity, empirical coverage, and volume.
    Both PCP and CP4Gn generally achieve desired coverage rate of $90\%$ ($\alpha=0.1$).
    CP4Gen outperforms PCP by giving prediction sets of smaller volume and less structural complexity.}
    \label{tab:results:Synthetic}
    \begin{center}
    \resizebox{!}{2.2cm}{%
    \begin{tabular}{cccccccc}
        \toprule
        Metric & Method &  25-Gaussians &  8-Gaussians &  Moon &  Circle &  Spiral &  S-Curve \\
        \midrule
        coverage & PCP & 0.90 & 0.91 & 0.90 & 0.91 & 0.93 & 0.90 \\
        coverage & CP4Gen & 0.88 & 0.90 & 0.90 & 0.87 & 0.91 & 0.87 \\
        \midrule
        complexity & PCP & 30.0 & 30.0 & 30.0 & 30.0 & 30.0 & 30.0 \\
        complexity & CP4Gen & 5.0 & 2.0 & 2.0 & 4.0 & 5.0 & 3.0 \\
        \midrule
        volume & PCP & 1.48 & 0.41 & 0.25 & 0.40 & 6.45 & 0.66 \\
        volume & CP4Gen & 0.97 & 0.36 & 0.21 & 0.37 & 5.56 & 0.54 \\
        \bottomrule
    \end{tabular}
    }
    \end{center}
\end{table}
\begin{table}[!ht]
    \caption{\textbf{CP Performance on Real World Datasets.}
    The performance of PCP and CP4Gen is assessed on real world datasets: MEPS-19, MEPS-20, MEPS-21, Facebook-1, Facebook-2, Bio, Blog, Temperature, Bike, Taxi, and Energy,
    focusing on prediction set structural complexity, empirical coverage, and volume. 
    Both methods achieve the target coverage rate of $90\%$ ($\alpha=0.1$). 
    CP4Gen produces prediction sets with reduced volume and lower structural complexity compared to PCP.}
    \label{tab:results:Real}
    \begin{center}
    \resizebox{!}{2.2cm}{%
    \begin{tabular}{ccccccc}
        \toprule
        Metric & Method & MEPS-19 & MEPS-20 & MEPS-21 & Facebook-1 & Facebook-2 \\
        \midrule
        coverage & PCP & 0.90 & 0.90 & 0.91 & 0.91 & 0.90 \\
        coverage & CP4Gen & 0.90 & 0.90 & 0.91 & 0.91 & 0.90 \\
        \midrule
        complexity & PCP & 30.0 & 30.0 & 30.0 & 30.0 & 30.0 \\
        complexity & CP4Gen & 1.0 & 1.0 & 1.0 & 1.0 & 1.0 \\
        \midrule
        volume & PCP & 29.33 & 31.34 & 32.04 & 12.92 & 11.39 \\
        volume & CP4Gen & 26.42 & 28.30 & 31.16 & 10.24 & 9.55 \\
        \bottomrule
    \end{tabular}
     }
     
    \vspace{1em}
    
    \resizebox{!}{2.2cm}{%
    \begin{tabular}{cccccccc}
        \toprule
        Metric & Method & Bio & Blog & Temperature & Bike & Taxi & Energy \\
        \midrule
        coverage & PCP & 0.90 & 0.91 & 0.90 & 0.90 & 0.90 & 0.95 \\
        coverage & CP4Gen & 0.90 & 0.91 & 0.90 & 0.90 & 0.90 & 0.94 \\
        \midrule
        complexity & PCP & 30.0 & 30.0 & 30.0 & 30.0 & 30.0 & 30.0 \\
        complexity & CP4Gen & 2.0 & 1.0 & 1.0 & 1.0 & 30.0 & 1.0 \\
        \midrule
        volume & PCP & 9.62 & 16.08 & 2.50 & 147.22 & 0.003 & 23.30 \\
        volume & CP4Gen & 8.94 & 13.84 & 2.23 & 120.93 & 0.003 & 21.83 \\
        \bottomrule
    \end{tabular}
    }

    \end{center}
\end{table}
\begin{table}[!ht]
    \caption{\textbf{CP Performance on Climate Emulation.}
    The performance of PCP and CP4Gen is assessed on four types of dimensionality reduced precipitation responses: Precip-2, Precip-2-N, Precip-3, and Precip-4. 
    Both methods typically achieve the target coverage rate of $90\%$ ($\alpha=0.1$). 
    CP4Gen demonstrates superior performance compared to PCP in terms of reducing prediction set volume and structural complexity.}
    \label{tab:results:Precip}
    \begin{center}
    \resizebox{!}{2.2cm}{%
    \begin{tabular}{cccccc}
        \toprule
        Metric & Method & Precip-2 & Precip-2-N & Precip-3 & Precip-4  \\
        \midrule
        coverage & PCP & 0.85 & 0.91 & 0.92 & 0.89 \\
        coverage & CP4Gen & 0.84 & 0.91 & 0.92 & 0.88 \\
        \midrule
        complexity & PCP & 30.0 & 30.0 & 30.0 & 30.0 \\
        complexity & CP4Gen & 1.0 & 1.0 & 1.0 & 1.0 \\
        \midrule
        volume & PCP & 0.0082 & 5.8275 & 0.0006 & 0.0003 \\
        volume & CP4Gen & 0.0072 & 5.4204 & 0.0005 & 0.0002 \\
        \bottomrule
    \end{tabular}
    }
    \end{center}
\end{table}


\section{Comparison between CP4Gen and HD-PCP}
\label{app:CP4Gen_vs_HD-PCP}
This section shows an additional comparison between CP4Gen and HD-PCP on the previously described datasets.
Their performance is evaluated in terms of marginal coverage, prediction set volume, and structural complexity.
The detailed results are shown in \Cref{tab:results:CP4Gen_vs_HD-PCP}.

HD-PCP~\citep{wang2022_a} is an extension of PCP.
It requires both generated samples and per-sample confidence scores.
It first filters out the samples with the lowest confidence scores, and then applies PCP on the remaining samples.
By dropping low-confidence samples, HD-PCP effectively reduces the fragmentation of the resulting prediction set and improves the robustness to outliers.
The additional access to confidence scores gives HD-PCP an unfair advantage over CP4Gen.
However, it also makes HD-PCP less generally applicable than CP4Gen as sample confidence scores are usually not available for many generative or simulation-based ensemble models.

In \Cref{tab:results:CP4Gen_vs_HD-PCP}, we observe that even though CP4Gen is at a disadvantage due to the lack of confidence scores,
it still achieves comparable performance to HD-PCP in terms of prediction set volume given the same coverage rate.
These results show that both the filtering in HD-PCP and the clustering in CP4Gen effectively alleviate the issue of outliers and make the algorithms more robust, leading to improved sharpness.
However, we emphasize that HD-PCP still has a much higher structural complexity than CP4Gen. 
This comparison provides strong evidence that the improved performance of CP4Gen is not purely due to outlier cleaning but a better distribution modeling choice made possible by GMM and $K$-means. 
We also ran experiments with CP4Gen applied to filtered ensemble samples; 
in these experiments, we still see improvement over HD-PCP. 
It further verifies that complexity/volume reductions are the result of these two orthogonal ingredients: a better modeling choice and a cleaner sample set. 

\begin{table}[!ht]
    \caption{\textbf{CP4Gen vs. HD-PCP Performance Comparison} 
    CP4Gen and HD-PCP are compared on a comprehensive set of datasets, including synthetic and real-world ones,
    in terms of prediction set structural complexity, empirical coverage, and volume.
    Both CP4Gen and HD-PCP generally achieve desired coverage rate of $90\%$ ($\alpha=0.1$); 
    and give prediction sets of comparable volumes.
    However, CP4Gen outperforms HD-PCP by creating prediction sets of much less structural complexity.}
    \label{tab:results:CP4Gen_vs_HD-PCP}
    \begin{center}
        \begin{tabular}{lcccccc}
        \toprule
        Dataset & CP4Gen Cov. & HD-PCP Cov. & CP4Gen Vol. & HD-PCP Vol. & CP4Gen $K$ & HD-PCP $K$ \\
        \midrule
        25-Gaussians & 0.91 & 0.91 & 1.08 & 1.49 & 5 & 28 \\
        8-Gaussians  & 0.91 & 0.89 & 0.37 & 0.41 & 2 & 30 \\
        Moon         & 0.90 & 0.91 & 0.22 & 0.21 & 2 & 15 \\
        Circle       & 0.90 & 0.89 & 0.40 & 0.37 & 30 & 28 \\
        Spiral       & 0.90 & 0.90 & 5.24 & 5.49 & 5 & 28 \\
        S-Curve      & 0.90 & 0.89 & 0.59 & 0.59 & 3 & 28 \\
        Meps-19      & 0.90 & 0.90 & 26.42 & 26.36 & 1 & 18 \\
        Meps-20      & 0.90 & 0.90 & 28.30 & 26.64 & 1 & 15 \\
        Meps-21      & 0.91 & 0.91 & 31.16 & 27.65 & 1 & 18 \\
        Facebook-1   & 0.91 & 0.91 & 10.24 & 11.24 & 1 & 28 \\
        Facebook-2   & 0.90 & 0.90 & 9.55 & 9.41 & 1 & 18 \\
        Bio          & 0.90 & 0.90 & 8.94 & 8.77 & 2 & 28 \\
        Blog         & 0.91 & 0.91 & 13.84 & 13.92 & 1 & 18 \\
        Temperature  & 0.90 & 0.92 & 2.23 & 2.16 & 1 & 15 \\
        Bike         & 0.90 & 0.90 & 120.93 & 113.76 & 1 & 18 \\
        Energy       & 0.94 & 0.90 & 21.83 & 11.01 & 1 & 9 \\
        \bottomrule
        \end{tabular}
    \end{center}
\end{table}

\section{Ablation Study on Ensemble Size}
\label{app:ens_size}
\begin{figure*}[htbp]
    \centering
    \includegraphics[width=\linewidth]{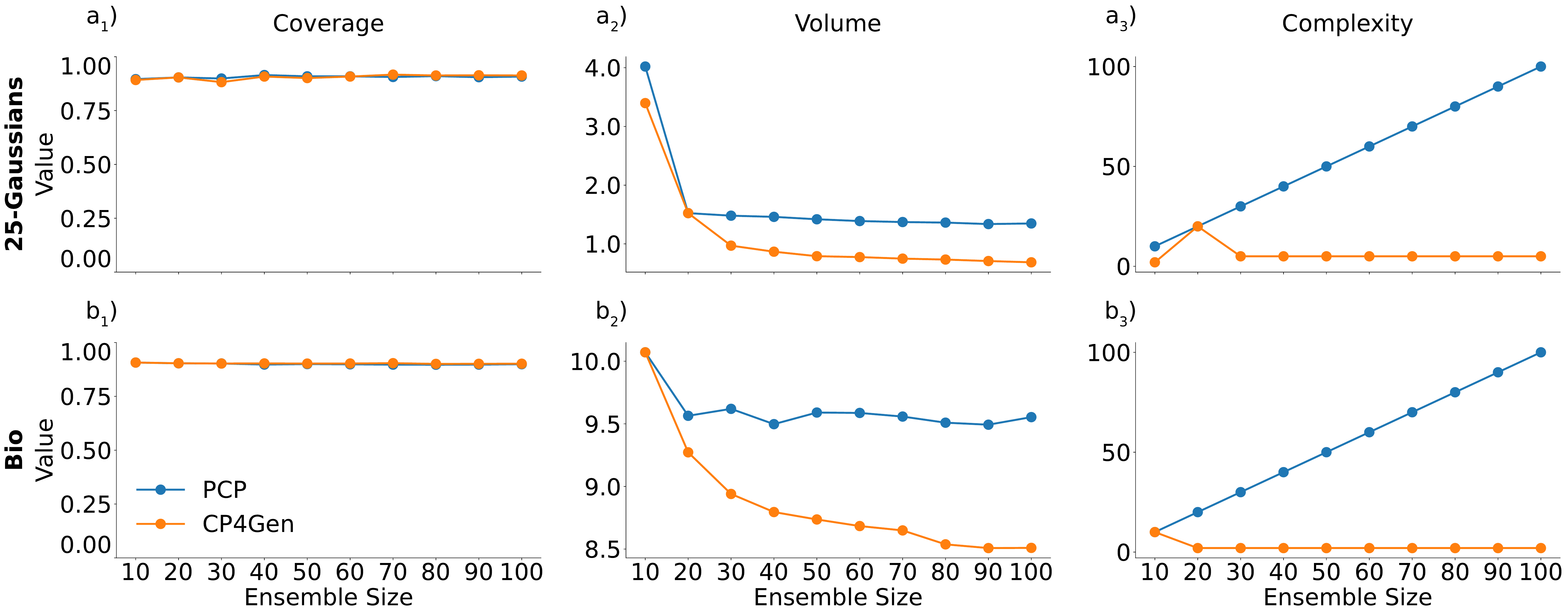}

    \caption{\textbf{A study on the effect of prediction ensemble size.}
    Results for (a) 25-Gaussians and (b) Bio.
    From left to right, prediction set coverage rate (a$_{1}$, b$_{1}$), volume (a$_{2}$, b$_{2}$), and structural complexity (a$_{3}$, b$_{3}$) are plotted separately each as a function of prediction ensemble size.
    The prediction ensemble size is varied on a grid of $[10, 20, 30, \dots, 100]$.
    Both CP methods' coverage rate stays flat close to $90\%$ ($\alpha=0.1$) for all ensemble sizes .
    As ensemble size increases, both methods' prediction set volume decreases.
    Interestingly, CP4Gen's set complexity converges to a constant (5 for 25-Gaussians and 2 for Bio) when ensemble size is larger than 20,
    while PCP's complexity scales linearly with ensemble size.
    }
    \label{fig:ens_size_analysis}
\end{figure*}
In the main paper, the prediction ensemble size $M$ is set to 30 for all experiments.
To study the effect of $M$, we ablate $M$ on two selected datasets: \texttt{25-Gaussians} and \texttt{Bio}.
The results are shown in \Cref{fig:ens_size_analysis}.
We observe that prediction set coverage stays close to the theoretical marginal coverage rate of $1-\alpha=0.9$ for all $M$,
indicating that PCP and CP4Gen are valid conformal prediction methods.
We also observe that both PCP and CP4Gen's prediction set volume decrease as $M$ increases when $M$ is small; 
and PCP's prediction set volume is lower bounded by CP4Gen's.
It is expected since PCP is a special case of CP4Gen when $K=M$ and $K$ is optimized for optimal volume within CP4Gen.
Interestingly, when $M$ is greater than 20, CP4Gen's prediction set volume continues to decrease while PCP's converges to a constant. 
It is consistent with our analysis in the main paper that PCP induces a response distribution estimation bias 
which prevents it from generating prediction sets asymptotically sharper as $M$ increases.
In terms of prediction set complexity,
PCP's scales linearly with $M$ as decided by its score function formulation.
In contrast, CP4Gen's complexity converges to a constant when $M$ is greater than 20.
It may be because a large ensemble size is required for $K$-means to find the correct number of modes $K$ in the underlying response distribution,
which is necessary for CP4Gen to give an accurate estimation of the distribution, 
and thus generate prediction sets with the optimal volume.

\section{Theoretical Results and Proofs}
\label{app:theory}
By the construction of the predictive set defined in \Cref{eq:conformal-set}, CP4Gen has a guaranteed marginal coverage as shown in the following Theorem.
\begin{theorem}\label{thm:marginal-coverage}
    Suppose the calibration data $(X_i, Y_i, \hat{\mathbf{Y}}_{i})$, $i\in\{1,2,\dots, N\}$ and the testing data $(X, Y, \hat{\mathbf{Y}})$ are exchangeable, then the predictive set defined in \Cref{eq:conformal-set} satisfies 
    \begin{equation}
        P_{X, Y, \hat{\mathbf{Y}}}(Y\in \hat{C}_\alpha(X))\geq 1-\alpha.
    \end{equation}
    When the scores are distinct almost surely,
    \begin{equation}
        P_{X, Y, \hat{\mathbf{Y}}}(Y\in\hat{C}_\alpha(X))\leq 1-\alpha+\frac{1}{N+1}.
    \end{equation}
\end{theorem}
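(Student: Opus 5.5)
The plan is the standard split-conformal exchangeability argument, applied to the augmented data triples. Write $Z_i=(X_i,Y_i,\hat{\mathbf{Y}}_i)$ for $i\le N$ and $Z_{N+1}=(X,Y,\hat{\mathbf{Y}})$. The generative model $q$ is fit on $\textbf{D}_p$, which is independent of these triples. So, conditional on $\textbf{D}_p$, the score $S_i=s(Z_i)$ is the same fixed measurable function applied to each $Z_i$; this includes the $K$-means fit on $\hat{\mathbf{Y}}_i$, the dominant-component Gaussian density, and the nugget $\beta^2$. I will assume $K$-means is run with a fixed (or exchangeably independent) tie-breaking and initialization rule, so that the clustering is a deterministic function of the ensemble. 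Exchangeability of the $Z_i$ then transfers to exchangeability of $(S_1,\dots,S_{N+1})$.

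First I will show that the set in \Cref{eq:conformal-set} satisfies
\[
Y\in\hat{C}_\alpha(X)\iff S_{N+1}\le Q_{1-\alpha}\big(\{S_i\}_{i=1}^N\cup\{\infty\}\big).
\]
This holds because the test set is defined by thresholding the same score function evaluated at $X$ with the test ensemble $\hat{\mathbf{Y}}$. Next I will use the standard quantile lemma: with the $\infty$ augmentation, $Q_{1-\alpha}$ equals the $\lceil(1-\alpha)(N+1)\rceil$-th smallest value among $S_1,\dots,S_N,\infty$. The event $S_{N+1}\le$ that value is equivalent to $S_{N+1}\le \hat Q$, where $\hat Q$ is the $\lceil(1-\alpha)(N+1)\rceil$-th smallest of all $N+1$ scores $S_1,\dots,S_{N+1}$. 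This equivalence is the usual replacement of $\infty$ by $S_{N+1}$ itself, and checking it is a short case analysis on whether $S_{N+1}$ falls below or above the threshold.

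By exchangeability, the rank of $S_{N+1}$ among the $N+1$ scores is uniform over positions, with ties broken in its favor. Hence
\[
P\big(S_{N+1}\le \hat Q\big)\ge \frac{\lceil(1-\alpha)(N+1)\rceil}{N+1}\ge 1-\alpha,
\]
where I take the convention that the quantile is $\infty$ when $\lceil(1-\alpha)(N+1)\rceil>N$. When the scores are almost surely distinct, the rank is exactly uniform on $\{1,\dots,N+1\}$. The probability then equals $\lceil(1-\alpha)(N+1)\rceil/(N+1)$, which is less than $1-\alpha+1/(N+1)$. Finally, I integrate out the conditioning on $\textbf{D}_p$.

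The main obstacle is the quantile/rank bookkeeping, in particular matching the paper's definition of $Q_\alpha$. That definition normalizes by $1/N$ even after adjoining $\infty$; I will interpret it as the empirical quantile over the $N+1$ augmented points. A secondary point is justifying that the randomness inside the score ($K$-means initialization) does not break exchangeability. I plan to handle this by absorbing that randomness into $\hat{\mathbf{Y}}_i$, treating it as i.i.d. auxiliary noise drawn per data point.
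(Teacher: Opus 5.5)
Your proposal is correct and follows the paper's route. The paper reduces the theorem to exchangeability of the scores plus the quantile/rank result in \Cref{lem:rank}, which it simply cites from \citet{romano2019} and \citet{tibshirani2019_thisone}; you prove that lemma inline, via the standard replace-$\infty$-by-$S_{N+1}$ equivalence and the $\lceil(1-\alpha)(N+1)\rceil/(N+1)$ count. One small correction: $\hat{\mathbf{Y}}_i$ depends on $\textbf{D}_p$ through $q$, so the triples are not independent of $\textbf{D}_p$, and what your argument actually uses is exchangeability of the triples conditional on $\textbf{D}_p$ (true here, since the $(X_i,Y_i)$ are i.i.d.\ and independent of $\textbf{D}_p$, the ensembles use fresh sampling noise, and this conditioning also covers the $K$ tuned on $\textbf{D}_p$).
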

\Cref{thm:marginal-coverage} shows that the marginal coverage of the prediction set is sharp up to $\mathcal O(\frac{1}{N+1})$ for distinct scores, which is true when the conditional distribution $P_{Y|X}$ is absolutely continuous. The theorem follows from the following lemma, and the proofs of both results are provided in~\citet{romano2019} and~\citet{tibshirani2019_thisone}.
\begin{lemma}\label{lem:rank}
    Suppose $Z_1,\dots, Z_n, Z_{n+1}$ are scalar random variables that are exchangeable and almost surely distinct, then for $\beta\in[0,1]$, 
    \begin{equation}
        \beta\leq P(Z_{n+1}\leq Q_\beta(Z_{1:n}\cup \{\infty\}))\leq \beta+\frac{1}{n+1}.
    \end{equation}
\end{lemma}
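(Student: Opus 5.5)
The plan is the standard rank argument for exchangeable scalars. Fix the empirical quantile definition from the Methodology section, applied to the augmented multiset $Z_{1:n}\cup\{\infty\}$ with $n+1$ elements:
\[
Q_\beta(Z_{1:n}\cup\{\infty\})=\inf\Bigl\{x:\tfrac{1}{n+1}\bigl(\#\{i\le n: Z_i\le x\}+\mathbb{I}(\infty\le x)\bigr)\ge\beta\Bigr\}.
\]
This equals the $\lceil\beta(n+1)\rceil$-th smallest element of the augmented set. When $\lceil\beta(n+1)\rceil=n+1$ it is $\infty$; when $\beta=0$ the statement is trivial, with the convention that the quantile is $-\infty$ or the minimum, and I would handle that edge case separately.

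\textbf{Step 1 (reduction to a rank event).} Set $k=\lceil\beta(n+1)\rceil$. I would show the deterministic identity
\[
\{Z_{n+1}\le Q_\beta(Z_{1:n}\cup\{\infty\})\}=\{Z_{n+1}\le Q_\beta(Z_{1:n+1})\}=\{R_{n+1}\le k\},
\]
where $R_{n+1}$ is the rank of $Z_{n+1}$ among $Z_1,\dots,Z_{n+1}$. The first equality comes from a case check.
\begin{itemize}
\item If $Z_{n+1}$ is among the $k$ smallest of the full sample, replacing $\infty$ by $Z_{n+1}$ does not change which value is $k$-th smallest relative to $Z_{n+1}$, so both events hold.
\item Otherwise, at least $k$ of $Z_1,\dots,Z_n$ lie strictly below $Z_{n+1}$, so the $k$-th smallest of $Z_{1:n}\cup\{\infty\}$ is strictly below $Z_{n+1}$, and both events fail.
\end{itemize}
This comparison step is the only place needing care. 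Ties are excluded by the almost-sure distinctness assumption, so ranks are well defined a.s.

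\textbf{Step 2 (uniformity of the rank).} By exchangeability, the law of $(Z_1,\dots,Z_{n+1})$ is invariant under permutations. Hence $P(R_{n+1}=j)=P(R_{i}=j)$ for all $i$. Since a.s. distinct values give exactly one index with each rank, $\sum_i \mathbb{I}(R_i=j)=1$, and therefore $P(R_{n+1}=j)=\tfrac{1}{n+1}$. It follows that $P(R_{n+1}\le k)=\tfrac{k}{n+1}$.

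\textbf{Step 3 (bounds).} Since $\beta(n+1)\le k<\beta(n+1)+1$, we get
\[
\beta\le \tfrac{k}{n+1}<\beta+\tfrac{1}{n+1},
\]
which gives both inequalities. The case $k=n+1$ is consistent: the quantile is $\infty$ and the probability is $1$.

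The lower bound in fact does not need distinctness. With ties one only has $P(R_{n+1}\le k)\ge k/(n+1)$ when using the smallest rank among tied values, which is what Theorem~\ref{thm:marginal-coverage} uses for its first claim.

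The main obstacle is Step 1: getting the equivalence exactly right under the paper's specific quantile convention, namely the infimum with $\ge\beta$ and the $1/(n+1)$ normalization after augmenting with $\infty$. An off-by-one error there would break the upper bound, so I would verify the $k$-th order statistic characterization first.
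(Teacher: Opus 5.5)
Your proposal is correct and is the standard exchangeability/rank argument that the paper relies on. The paper does not prove the lemma itself but defers to \citet{romano2019} and \citet{tibshirani2019_thisone}, whose proofs likewise identify the event with $\{R_{n+1}\le\lceil\beta(n+1)\rceil\}$, use uniformity of the rank, and bound $\lceil\beta(n+1)\rceil/(n+1)$. Your Step 1 is cleanest written as the count identity $\{Z_{n+1}\le Q_\beta(Z_{1:n}\cup\{\infty\})\}=\{\#\{i\le n: Z_i<Z_{n+1}\}\le k-1\}$, which holds even with ties and is the form your side remark on the lower bound needs.
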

\begin{lemma}\label{lem:HPD}
    Let $f$ be a probability density on $\mathbb R^d$, let $\lambda(\cdot)$ denote the Lebesgue measure, and let
    \begin{equation}
        C^\mathrm{HD}_\alpha:=\{y:f(y)\geq \tau_\alpha\} \;\;\text{ with }\;\;\int_{C^\mathrm{HD}_\alpha} f(y)dy = 1-\alpha
    \end{equation}
    be the highest-density (HD) set corresponding to probability $1-\alpha$. For simplicity, we assume the highest-density set  $C^{\text{HD}}_\alpha$ for chosen $f$ and $\alpha$ exists and is unique up to sets of measure zero. Then the following statements hold:

    \begin{enumerate}
        \item \textbf{Deterministic set}. For any measurable $C\subset\mathbb R^d$ with $\int_C f\geq 1-\alpha$, 
        \begin{equation}
            \lambda(C)\geq \lambda(C^\mathrm{HD}_\alpha),
        \end{equation}
with equality iff $ C  = C^\mathrm{HD}_\alpha$ (up to a null set).
        \item \textbf{Random set}. If $C$ is a random measurable set such that $\mathbb E_{C}\int_{ C} f \geq  1-\alpha$, then
        \begin{equation}
            \mathbb E_C\lambda(C)\geq \lambda(C_\alpha^\mathrm{HD}),
        \end{equation}
with equality iff $ C  = C^\mathrm{HD}_\alpha$ almost surely (up to a null set).
    \end{enumerate}
\end{lemma}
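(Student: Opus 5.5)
The proof is a bathtub-principle argument: compare the set $C$ with $C^\mathrm{HD}_\alpha$ on the two pieces where they differ, and use the level $\tau_\alpha$ as the exchange rate between mass and volume.

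\textbf{Deterministic part.} Write $C = (C\cap C^\mathrm{HD}_\alpha)\cup A$ and $C^\mathrm{HD}_\alpha=(C\cap C^\mathrm{HD}_\alpha)\cup B$, where $A=C\setminus C^\mathrm{HD}_\alpha$ and $B=C^\mathrm{HD}_\alpha\setminus C$. From $\int_C f\ge 1-\alpha=\int_{C^\mathrm{HD}_\alpha} f$ we get $\int_A f\ge \int_B f$. On $A$ we have $f<\tau_\alpha$, and on $B$ we have $f\ge\tau_\alpha$. Hence
\begin{equation*}
\tau_\alpha\lambda(A)\ge \int_A f\ge\int_B f\ge \tau_\alpha\lambda(B).
\end{equation*}
Assuming $\tau_\alpha>0$ (which holds whenever $\alpha>0$, since $f$ integrates to one), this gives $\lambda(A)\ge\lambda(B)$, and therefore $\lambda(C)\ge\lambda(C^\mathrm{HD}_\alpha)$. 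The case $\lambda(A)=\infty$ is trivial.

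\textbf{Equality case (deterministic).} If $\lambda(C)=\lambda(C^\mathrm{HD}_\alpha)$, then every inequality in the chain is tight, so
\begin{equation*}
\int_A(\tau_\alpha-f)=0 \quad\text{and}\quad \int_B (f-\tau_\alpha)=0.
\end{equation*}
Since $f<\tau_\alpha$ strictly on $A$, the first identity forces $\lambda(A)=0$. Then $\lambda(B)=\lambda(A)=0$, so $C=C^\mathrm{HD}_\alpha$ up to a null set. This is the one delicate point: $B$ could carry a plateau where $f=\tau_\alpha$. That plateau is harmless here because $\lambda(B)=\lambda(A)$ is forced, and the assumed uniqueness of the HD set also rules out such ambiguity.

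\textbf{Random part.} Apply the same pointwise decomposition for each realization of $C$, with $A_C$ and $B_C$ defined as before. The key quantity is $g(y)=(f(y)-\tau_\alpha)\bigl(\mathbb 1_{C^\mathrm{HD}_\alpha}(y)-\mathbb 1_C(y)\bigr)$, which is nonnegative everywhere. Integrating $g$ gives
\begin{equation*}
\int_{C^\mathrm{HD}_\alpha} f-\int_C f \;\ge\; \tau_\alpha\bigl(\lambda(C^\mathrm{HD}_\alpha)-\lambda(C)\bigr).
\end{equation*}
Take expectations over $C$, using Tonelli and the measurability of $(\omega,y)\mapsto \mathbb 1_{C(\omega)}(y)$. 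The left side has nonpositive expectation, because $\mathbb E\int_C f\ge 1-\alpha$. Hence
\begin{equation*}
\tau_\alpha\,\mathbb E\lambda(C)\ge\tau_\alpha\lambda(C^\mathrm{HD}_\alpha),
\end{equation*}
and dividing by $\tau_\alpha>0$ gives the claim. If $\mathbb E\lambda(C)=\infty$, the inequality holds trivially.

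\textbf{Equality case (random).} Equality forces $\mathbb E\int g=0$, so $g=0$ a.e. almost surely. Arguing as in the deterministic case, $\lambda(A_C)=0$ almost surely. Then
\begin{equation*}
\mathbb E\lambda(B_C)=\mathbb E\lambda(A_C)=0,
\end{equation*}
using $\mathbb E\lambda(C)=\lambda(C^\mathrm{HD}_\alpha)$, which is finite. Therefore $C=C^\mathrm{HD}_\alpha$ almost surely, up to a null set.

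The main obstacle is bookkeeping rather than substance: justifying the exchange of expectation and integral, and handling infinite volumes and the plateau $\{f=\tau_\alpha\}$ in the equality statements. The paper's uniqueness assumption is what disposes of the plateau.
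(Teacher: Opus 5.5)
Your proof is correct, but it takes a different route from the paper's. For part 1 the paper simply invokes the known minimality of HD sets (part 1 is also the deterministic special case of part 2). Your exchange argument on $A=C\setminus C^\mathrm{HD}_\alpha$ and $B=C^\mathrm{HD}_\alpha\setminus C$ proves it directly.

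For part 2 the paper passes to the selection function $g(y)=P(y\in C)$. It then minimizes $\int g$ over $0\le g\le 1$ subject to $\int gf\ge 1-\alpha$, and derives the threshold form of the minimizer from a formal Lagrangian/KKT computation. That derivation explains \emph{why} a level set of $f$ is optimal. However, it presumes that a minimizer and multipliers exist in an infinite-dimensional problem. It also discards the plateau by claiming that uniqueness makes $\{f=\tau\}$ null, which need not hold. For example, take a radial density that is flat at height $\tau_\alpha$ on a shell and strictly decreasing elsewhere, with $1-\alpha$ equal to the mass inside the shell's outer radius; its HD set is unique, yet $\{f=\tau_\alpha\}$ has positive volume.

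Your argument is the weak-duality half of the same picture. Taking expectations of your pointwise inequality gives $\int (f-\tau_\alpha)\bigl(\mathbb{I}_{C^\mathrm{HD}_\alpha}(y)-P(y\in C)\bigr)\,dy\ge 0$, so you supply the multiplier $1/\tau_\alpha$ rather than derive it. Since the candidate optimizer is given in the statement, nothing is lost. You gain a fully rigorous proof that needs only Tonelli, and an equality case in which the identity $\lambda(B)-\lambda(A)=\lambda(C^\mathrm{HD}_\alpha)-\lambda(C)$ disposes of any plateau directly.

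Your closing remark undersells this: your argument never uses the uniqueness hypothesis, so it proves the lemma for $\{f\ge\tau_\alpha\}$ whenever $\tau_\alpha>0$. That restriction is genuinely needed, not cosmetic. For $\alpha=0$ and $f(y)=2y$ on $[0,1]$, the unique HD set is $\mathbb R$, while $C=[0,1]$ has full mass and volume $1$.
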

\begin{proof}
   (1) The statement follows from the fact that any HD set minimizes Lebesgue measure among all measurable sets with at least $1-\alpha$ probability mass under $f$.
   (2) Define the selection function $g(y) := P(y\in C)\in[0,1]$. Then
   \begin{equation}
    \begin{aligned}
       \mathbb E_C\lambda({C}) & = \int g(y)dy, \\
       \mathbb E_{{C}}\left[\int_{{C}}f\right] & = \int g(y)f(y)dy \geq 1-\alpha.
    \end{aligned}
     \end{equation}

   Thus we are minimizing $\int g \;d\lambda$ over $g$ subject to $0\leq g\leq 1$ and $\int gf\geq 1-\alpha$. Form the Lagrangian with a scalar multiplier $\lambda\geq 0$ for the $1-\alpha$ mass constraint and function multipliers $u(y), v(y)\geq 0$ for the pointwise bounds:
   \begin{equation}
       \mathcal L(g, \lambda, u, v) = \int[g-\lambda fg-ug+v(g-1)]d\lambda+\lambda(1-\alpha).
   \end{equation}
   First-order optimality condition holds pointwise a.e.: 
   \begin{equation}
       \frac{\delta\mathcal L}{\delta g}(y) = 1-\lambda f(y)-u(y)+v(y) = 0.
   \end{equation}
   And we also have the complementary slackness
   \begin{equation}
   \begin{aligned}
       & u(y)g(y)=0, \\
       v(y) &(g(y)-1) = 0, \\
       \lambda(\int fg& -(1-\alpha)) = 0.
   \end{aligned}
   \end{equation}
   From these, if $\lambda f(y)>1$, then the first-order optimality condition plus non-negativity of $u(y), v(y)$ forces $v(y)>0$, and the complementary slackness implies that $g(y) = 1$. Similarly, if $\lambda f(y)<1$, the same reasoning implies that $g(y) = 0$; if $\lambda f(y) = 1$, then $v(y) = u(y) = 0$, and 
   \begin{equation}
       1-\lambda f(y) = 0 \quad\Rightarrow\quad f(y) = \tau :=1/\lambda.
   \end{equation}
   As a result, minimizer $g^*$ must satisfy
   \begin{align}
        & \int f g^* = 1-\alpha, \\
       g^*(y)  = &\begin{cases}
             1, &f(y)>\tau,\\
             0,&f(y)<\tau,\\
             \theta(y),&f(y)=\tau, \;\;0\leq\theta(y)\leq 1.
       \end{cases}
   \end{align}
Because we assume the highest-density set is unique, $\{y:f(y) = \tau\}$ must have measure $0$. Thus the unique minimizer $g^*$ is given by the indicator of the $1-\alpha$ HD set $g^* = \mathbb{I}_{C^{\text{HD}}_\alpha}$. Therefore, 
\begin{equation}
    \mathbb E[\lambda(C)]\geq\int g^* d\lambda = \lambda(C^{\text{HD}}_\alpha),
\end{equation}
with equality iff $g = g^*$ almost everywhere, i.e. $C = C^{\text{HD}}_\alpha$ almost surely (up to a null set).
\end{proof}

The next proposition is introduced under simplified assumptions for exposition, but the underlying intuition applies broadly. Despite employing a few approximations, the proof elucidates the fundamental mechanism of the result.
\begin{proposition}\label{prop:sharpness:app}
Assume the true conditional distribution $P(Y|X)$ is Gaussian with mean $\mu$ and covariance matrix $\Sigma$, and the generative model outputs i.i.d. ensemble data from $P(Y|X)$, then the conformal set given by CP4Gen gets asymptotically sharper as $M$ increases, while the set given by PCP does not.
\end{proposition}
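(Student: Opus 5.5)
Since the conditional law does not depend on $X$, the calibration and test scores are i.i.d. given the fitted procedure. The conformal threshold therefore converges (as $N\to\infty$) to the population $(1-\alpha)$-quantile of the score. I will compare the limiting sets of the two methods as $M\to\infty$ and use \Cref{lem:HPD} as the yardstick. The Gaussian highest-density set is the ellipsoid
\begin{equation}
C^{\mathrm{HD}}_\alpha=\{y:(y-\mu)^\top\Sigma^{-1}(y-\mu)\le \chi^2_{d,1-\alpha}\}.
\end{equation}
By part (2) of \Cref{lem:HPD}, any random set with $1-\alpha$ expected coverage has expected volume at least $\lambda(C^{\mathrm{HD}}_\alpha)$, with equality only if it coincides with $C^{\mathrm{HD}}_\alpha$ almost surely. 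So it suffices to show two things: CP4Gen's set converges to $C^{\mathrm{HD}}_\alpha$, and PCP's set stays a positive distance (in symmetric-difference measure) away from it.

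\textbf{CP4Gen.} Take $K=1$, or note that the volume-optimal $K$ does no worse than $K=1$. The cluster is then the whole ensemble, so $\boldsymbol\mu=\bar{\hat Y}$ and $\boldsymbol\Sigma=\hat\Sigma_M$. By the law of large numbers these converge a.s. to $\mu$ and $\Sigma$ at rate $O_P(M^{-1/2})$; the nugget $\beta^2$ can be taken as $\beta_M\to0$. The score is
\begin{equation}
s=\tfrac12 (Y-\bar{\hat Y})^\top\hat\Sigma_M^{-1}(Y-\bar{\hat Y})+\tfrac12\log\det(2\pi\hat\Sigma_M).
\end{equation}
Its distribution converges to that of a shifted $\tfrac12\chi^2_d$. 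Using continuity of quantiles, the threshold converges, and the set is an ellipsoid whose center, shape and radius converge to those of $C^{\mathrm{HD}}_\alpha$. Hence $\lambda(\hat C_\alpha)\to\lambda(C^{\mathrm{HD}}_\alpha)$, the minimal achievable volume. This is what "asymptotically sharper" means here: the expected volume is eventually decreasing to the optimum.

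\textbf{PCP.} The score is $\min_m\|Y-\hat Y^m\|$, so the set is a union of $M$ balls of a common radius $r_M$. Two regimes need to be handled. First, as $M\to\infty$ the ensemble fills $\mathbb R^d$, so the radius $r_M$ needed for $1-\alpha$ coverage tends to $0$. The union then looks like a neighborhood of the sample cloud, whose density profile follows the sample density $f$ and not its upper level set. More precisely, the coverage event $\min_m\|Y-\hat Y^m\|\le r$ behaves like a Poisson hit with intensity $M f(y)\,v_d r^d$. So $r_M^d\asymp c/M$ with $c$ solving
\begin{equation}
\mathbb E\big[1-e^{-c\,v_d f(Y)}\big]=1-\alpha.
\end{equation}
The expected volume then converges to
\begin{equation}
\int \big(1-e^{-c\,v_d f(y)}\big)\,dy.
\end{equation}
This is the expected volume of a random set with selection function $g(y)=1-e^{-c v_d f(y)}$, which is strictly between $0$ and $1$ everywhere. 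Such a $g$ is not the indicator $\mathbb I_{C^{\mathrm{HD}}_\alpha}$, so by the equality case of \Cref{lem:HPD}(2) the limiting volume strictly exceeds $\lambda(C^{\mathrm{HD}}_\alpha)$. PCP therefore plateaus at a biased, suboptimal volume; it does not converge to the optimum.

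\textbf{Main obstacle.} The hard step is making this PCP limit rigorous: interchanging the $M\to\infty$ limit with the quantile, and justifying the Poisson/small-ball approximation uniformly over $y$, including tail regions where $f$ is small. I would first prove it on a compact set with $f$ bounded below, using standard $k$-nearest-neighbor coverage asymptotics. I would then control the Gaussian tails by a truncation argument, since they contribute negligibly to both coverage and volume. The CP4Gen part is routine by comparison, given consistency of the sample moments.
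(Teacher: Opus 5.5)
Your argument is correct. It shares the paper's skeleton: the population threshold as $N\to\infty$, \Cref{lem:HPD} as the volume yardstick, and a $K=1$ Gaussian plug-in for CP4Gen. For CP4Gen you track the ellipsoid's center, shape and radius directly, where the paper instead uses a delta-method statement about $\hat f^{\mathrm{GMM}}(y)$.

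The PCP half is where you take a genuinely different route. The paper never analyzes the union of balls itself. It recasts PCP as a KDE log-max score with an auxiliary bandwidth $\sigma=o(M^{-1/d})$ and bounds the log-sum/log-max gap. It then invokes KDE inconsistency (since $M\sigma^d\to0$) and asserts that super-level sets of an inconsistent estimator cannot be the HD set. That route buys the paper's unified ``every method is a plug-in density estimator'' picture. However, it is heuristic exactly where it matters:
\begin{itemize}
\item $\sigma$ has no effect on PCP's set, since the ranking is the same for every $\sigma$.
\item Inconsistency of $\hat f$ does not by itself force its level sets away from those of $f^\star$.
\item No limiting volume is computed, so a gap that vanishes as $M\to\infty$ is not formally excluded.
\end{itemize}

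Your small-ball/Boolean-model computation closes this. The limiting inclusion function $g(y)=1-e^{-c v_d f(y)}$ lies strictly in $(0,1)$, so the equality case of \Cref{lem:HPD}(2) fails for the limit itself. You also get the explicit plateau $\int(1-e^{-c v_d f(y)})\,dy$, which matches the constant PCP volume in the paper's ensemble-size ablation.

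The obstacle you flag is milder than you expect. Since $\int\mathbb P(\|\hat Y^1-y\|\le r)\,dy=v_d r^d$ exactly, the bound $1-(1-p)^M\le Mp$ supplies dominating functions. Their integrals $v_d M r_M^d$ converge to $c v_d=\int c v_d f$, so generalized dominated convergence handles the Gaussian tails without truncation. Likewise $M r_M^d\to c$ follows because the coverage $h_M(t)$ at radius $(t/M)^{1/d}$ is monotone in $t$. It converges pointwise to the continuous, strictly increasing $h(t)=\mathbb E[1-e^{-t v_d f(Y)}]$.

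Two small corrections. What you prove for CP4Gen is convergence to the optimal volume, not that the volume is ``eventually decreasing.'' And passing from $\lambda(\hat C_\alpha)\to\lambda(C^{\mathrm{HD}}_\alpha)$ in probability to convergence of the expected volume needs a routine uniform-integrability check.
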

\textbf{Intuition.} Assume the generative model is correctly specified and the true conditional density $f$ is an input-ignorant Gaussian with parameters $\theta^\star=(\mu^\star,\Sigma^\star)$. In this case, PCP amounts to fitting the ensemble $\{\hat Y^m\}_{m=1}^M$ using a kernel density estimator (KDE) with a vanishing bandwidth $\sigma$, and using the negative log-density as the score function. By contrast, CP4Gen is equivalent to first fitting a Gaussian model to the ensemble and applying the same score function. The proof shows that the KDE fit is an inconsistent density estimate for all $M$ under the vanishing $\sigma$ assumption. On the other hand, the Gaussian fit, being correctly specified, provides a consistent (asymptotically unbiased) plug-in density estimate. Consequently, its conformal set converges to the true HD set of $f$ and is asymptotically sharp. Similar analysis can be extended to Mixture of Gaussian and general non-Gaussian target with careful justification.
\begin{proof}
\noindent We begin by observing that PCP and CP4Gen essentially differ in the density models used to fit the ensemble. For true label $Y$, consider the KDE log-sum and log-max scores from \Cref{eq:pcp-score}:
\begin{equation}
\begin{aligned}
    & - \log \sum_{m=1}^M \frac{1}{M}\ \mathcal{N}(Y; \hat{Y}^{m}, \sigma^2 I), \\
    & - \log\underset{1\leq m\leq M}{\max} \frac{1}{M}\ \mathcal{N}(Y; \hat{Y}^{m}, \sigma^2 I).    
\end{aligned}
\end{equation}
PCP employs the minimum-distance score $\min_{1\leq m\leq M}\|Y-\hat{Y}^{m}\|$, which is equivalent in ranking to the KDE log-max score, since conformal sets depend only on order statistics. Directly analyzing either the KDE log-max score or the minimum-distance score is challenging due to the non-smooth maximum operator. However, by choosing $\sigma$ appropriately for fixed $M$, one can control the gap between the KDE log-sum and log-max scores.  Defining $m^*(y) = \arg\min_m\|y-\hat{Y}^m\|$, we obtain
\begin{equation}
\begin{aligned}
    & \Delta(y) = \min_{m\neq m^*(y)}\frac{\|y-\hat{Y}^m\|^2-\|y-\hat{Y}^{m^*}\|^2}{2\sigma^2},\\
    0 \leq\log\sum_{m=1}^M\frac{1}{M}\mathcal{N}&(y; \hat{Y}^{m}, \sigma^2 I)- \log\max_m\frac{1}{M}\mathcal{N}(y; \hat{Y}^{m}, \sigma^2 I)\leq\log(1+(M-1)e^{-\Delta(y)}).
\end{aligned}
\end{equation}
Since $\{\hat{Y}^m\}_{m=1}^M$ are random samples from a continuous Gaussian distribution with fixed covariance, the distance between the two nearest neighbors scales as $\min_{m\neq n}\|\hat{Y}^m-\hat{Y}^n\|\asymp
M^{-1/d}$. Thus we require $e^{-\Delta(y)}\to 0$, i.e. $\sigma = o(M^{-1/d})$, so that all non-nearest terms become exponentially negligible \citep{biau2015}. With such choice of $\sigma$, the approximation gap between the KDE log-sum and KDE log-max scores is uniformly controlled with high probability. This reduction allows us to focus on the log-sum score, whose concentration properties are more tractable.

On the other hand, consider the GMM log-sum and log-max scores in \Cref{eq:cp4gen-score}:
\begin{equation}
\begin{aligned}
    & - \log \sum_{k=1}^K w_k\ \mathcal{N}(Y; \boldsymbol{\mu}_k, \boldsymbol{\Sigma}_k), \\
    & - \log\underset{1\leq k\leq K}{\max} w_k\ \mathcal{N}(Y; \boldsymbol{\mu}_k, \boldsymbol{\Sigma}_k).
\end{aligned}
\end{equation}
Under the Gaussian target assumption, these two scores above coincide, and we may analyze the GMM log-sum score directly without adjustment. Thus, up to approximation and rank equivalence, both PCP and CP4Gen use the negative log-density as the conformal score function, but they fit the conditional distribution with different models.

We next show that PCP’s estimated conditional law does not converge to the true Gaussian distribution, whereas CP4Gen’s estimate does. In conformal prediction, the $n$-th calibration datum consists of $M$ i.i.d. draws from the Gaussian distribution. PCP fits these samples with a KDE $\hat f^\text{KDE}_n = \sum_{m=1}^M\frac{1}{M}\mathcal N(\ \cdot\ ;\hat Y^m,\sigma^2 I)$ using a vanishing bandwidth $\sigma= o(M^{-1/d})$.

For consistency of the kernel density estimator, the bandwidth must satisfy $\sigma\to 0$ and $M\sigma^d \to \infty$ \citep{chen2017}. This rate is asymptotically slower than the regime required earlier for bounding the difference between the log-sum and log-max mixture scores. Consequently, for any $M$, one can always choose a bandwidth $\sigma$ small enough that the KDE evaluation fails to converge to the true Gaussian density. 

By contrast, CP4Gen assumes a parametric form. It estimates the plug-in parameters $\hat\theta$ from the ensemble and fits a Gaussian model $\hat f^{\text{GMM}}_n = \mathcal N(\ \cdot\ ;\hat\mu,\hat\Sigma)$. As $M$ increases, standard parametric theory yields
\begin{equation}
\sqrt{M}\,(\hat\theta-\theta^\star)\ \xrightarrow{d}\ \mathcal N\big(0,\ I(\theta^\star)^{-1}\big),
\end{equation}
where $I(\theta^\star)$ is the Fisher information of the Gaussian parameter at $\theta^\star$. By the delta method, for each fixed $y$,
\begin{equation}
\sqrt{M}\,\big(\hat f_n^{\mathrm{GMM}}(y)-f^\star(y)\big)
\ \xrightarrow{d}\
\mathcal N\!\Big(0,\ \nabla g_y(\theta^\star)^\top I(\theta^\star)^{-1}\nabla g_y(\theta^\star)\Big),
\end{equation}
where $g_y(\theta) := f_\theta(y) = \mathcal N(y;\mu,\Sigma)$ is the density function seen as a function of $\theta$. Thus, $\hat f^{\text{GMM}}_n$ converges asymptotically to the true Gaussian distribution with $M\rightarrow\infty$.

\noindent Finally, we connect conditional density estimation to the expected volume of conformal sets. Write $f^\star(\cdot)$ for the true conditional density and $\hat{f}(\cdot) = \hat{f}^\bullet_{N+1}(\cdot)$ for a plug-in estimator ($\bullet\in\{\text{KDE}, \text{GMM}\}$). When the number of calibration samples $N$ goes to infinity, the conformal threshold converges to the population threshold, so the conformal set can be written as the super-level set by Neyman-Pearson Lemma:
\begin{equation}\label{eq:coverage-criteria}
    \hat{C}_\alpha \;=\; \{y : \hat{f}(y)\geq \tau\}, 
\end{equation}
where $\tau$ solves 
\begin{equation}
    \mathbb E_{\hat{f}}\int\mathbb{I}\{\hat{f}(y)\geq \tau\} f^\star(y)dy = 1-\alpha.
\end{equation}
Now for $\hat{f}(\cdot) = \hat{f}^{\text{KDE}}_{N+1}(\cdot)$, since the KDE bandwidth shrinks at a fast speed $\sigma = o(M^{-1/d})$, the KDE estimator is inconsistent and does not converge to $f^\star$ for any $y$. Then, the highest-density (HD) sets of $\hat{f}$ and $f^\star$ differ on a set of nonzero Lebesgue measure. By Lemma~\ref{lem:HPD}, among all deterministic measurable $A\subset\mathbb R^d$ with $\int_A f^\star = 1-\alpha$, the minimal-volume set is the highest-density (HD) set $C^{\text{HD}}_\alpha$, and for randomized $A$, $\mathbb E \lambda(A)\geq \lambda(C^{\text{HD}}_\alpha)$ with equality iff $A$ agrees with $C^{\text{HD}}_\alpha$ up to a mull set almost surely. However, from \Cref{eq:coverage-criteria} we find that although $\hat{C}_\alpha$ attains the target mass $1-\alpha$ under $f^\star$ on average, it cannot be the HD set of $f$ almost surely because its boundary is a level set of $\hat{f}$ rather than $f^\star$. Hence $\lambda(\hat{C}_\alpha)>\lambda(C^\mathrm{HD}_{\alpha})$, making PCP suboptimal. 

In contrast, CP4Gen produces an asymptotically unbiased density estimator when the true law is Gaussian. Its conformal sets therefore converge to the highest-density set of $f^\star$, achieving asymptotically optimal volume.  
\end{proof}
While the analysis above focuses on large $M$ and input-ignorant Gaussian, the empirical results in \Cref{fig:ens_size_analysis} demonstrate that CP4Gen yields tighter prediction sets for multi-modal targets with relatively small ensemble sizes ($M\geq 10$). This can be justified as follows: the reasoning in the proof naturally extends to mixtures of well-separated Gaussians. For more general distributions, both methods incur some bias; however, CP4Gen can flexibly mitigate this bias by increasing the number of mixture components $K$, leading to smaller conformal sets. Empirical evidence in \Cref{fig:ens_size_analysis} support this analysis: as $M$ increases, PCP consistently produces larger prediction sets, whereas CP4Gen leverages structural information to generate sharper sets, even when the target distribution is not a Gaussian mixture.


\end{document}